\def\eqref#1{equation~\ref{#1}}
\def\1{\bm{1}}
\def\rn{{\textnormal{n}}}
\def\rr{{\textnormal{r}}}
\def\rx{{\textnormal{x}}}
\def\ry{{\textnormal{y}}}
\def\rz{{\textnormal{z}}}
\def\rvn{{\mathbf{n}}}
\def\vzero{{\bm{0}}}
\def\vc{{\bm{c}}}
\def\vn{{\bm{n}}}
\def\vp{{\bm{p}}}
\def\vq{{\bm{q}}}
\def\vs{{\bm{s}}}
\def\vv{{\bm{v}}}
\def\vx{{\bm{x}}}
\def\vy{{\bm{y}}}
\def\vz{{\bm{z}}}
\def\mN{{\bm{N}}}
\def\mX{{\bm{X}}}
\DeclareMathAlphabet{\mathsfit}{\encodingdefault}{\sfdefault}{m}{sl}
\SetMathAlphabet{\mathsfit}{bold}{\encodingdefault}{\sfdefault}{bx}{n}
\def\sA{{\mathbb{A}}}
\newtheorem{theorem}{Theorem}[section]
\newtheorem{corollary}[theorem]{Corollary}
\newtheorem{remark}[theorem]{Remark}
\newtheorem{example}[theorem]{Example}
\begin{document}

%
\runningtitle{When are Distances Informative for the Ground Truth in Noisy High-Dimensional Data?}

%
\runningauthor{Robin Vandaele, Bo Kang, Tijl De Bie, Yvan Saeys}

\twocolumn[

\aistatstitle{The Curse Revisited: When are Distances Informative for the Ground Truth in Noisy High-Dimensional Data?}

\aistatsauthor{ 
Robin Vandaele$^{1,2,3}$ 
\And Bo Kang$^3$ 
\And Tijl De Bie$^3$ 
\And  Yvan Saeys$^{1,2}$}

\aistatsaddress{ 
$^1$Department of Applied Mathematics, Computer Science and Statistics, Ghent University, Gent, Belgium\\
$^2$Data mining and Modelling for Biomedicine (DaMBi), VIB Inflammation Research Center, Gent, Belgium\\
$^3$IDLab, Department of Electronics and Information Systems, Ghent University, Gent, Belgium} ]

\begin{abstract}
Distances between data points are widely used in machine learning applications. Yet, when corrupted by noise, these distances---and thus the models based upon them---may lose their usefulness in high dimensions. Indeed, the small marginal effects of the noise may then accumulate quickly, shifting empirical closest and furthest neighbors away from the ground truth. In this paper, we exactly characterize such effects in noisy high-dimensional data using an asymptotic probabilistic expression. Previously, it has been argued that neighborhood queries become meaningless and unstable when distance concentration occurs, which means that there is a poor relative discrimination between the furthest and closest neighbors in the data. However, we conclude that this is not necessarily the case when we decompose the data in a ground truth---which we aim to recover---and noise component. More specifically, we derive that under particular conditions, empirical neighborhood relations affected by noise are still likely to be truthful even when distance concentration occurs. We also include thorough empirical verification of our results, as well as interesting experiments in which our derived `phase shift' where neighbors become random or not turns out to be identical to the phase shift where common dimensionality reduction methods perform poorly or well for recovering low-dimensional reconstructions of high-dimensional data with dense noise.
\end{abstract}

\section{Introduction}
\label{intro}

\paragraph{Motivation}

The notorious \emph{curse of dimensionality} encompasses various phenomena that occur in high-dimensional data, which complicate their analysis \cite{indyk1998approximate, beyer1999nearest, 10.1007/3-540-44503-X_27, verleysen2005curse, kuo2005lifting, Radovanovic:2009:NNH:1553374.1553485}.
In the particular case of distance functions such as Euclidean, there tends to be little contrast in the distances between different pairs of points.
This phenomenon is known as \emph{distance concentration}, and impedes learning and inference from the data through (local and global) neighborhood-based approaches.
``In other words, virtually every data point is then as good as any other, and slight perturbations to the query point would result in another data point being chosen as the nearest neighbor" \cite{beyer1999nearest}. 
Therefore, distance concentration in data is commonly regarded as indicative for the distances between points to be meaningless and the empirical neighborhood relations to be unstable.

Notwithstanding the high emphasis on `distance concentration' in the current literature, \emph{there is an entirely different yet natural possible view on `when distances are meaningful' when the data is corrupted by noise}, which we formalize in this paper.
We assume the common practical case that the observed data is composed of a \emph{ground truth} component $\mX$, and a \emph{dense noise} component $\mN$.
By `dense', we mean that each entry of $\mX$ is likely corrupted by a small nonzero error value.
We regard distances as meaningful when the (closest, furthest, $k$-nearest, \ldots) neighborhood relations derived from the observed data $\mX+\mN$ likely coincide with those that would have been obtained from $\mX$, and thus, are informative for the ground truth model underlying the data.
Since noise is unavoidable in many real world data due to practical problems in the collection and preparation processes \cite{zhu2004class}, we argue that this is a highly natural way to characterize meaningfulness of distances in such data.
For example, biological data such as single cell sequencing data \cite{zhang2021noise} is inherently noisy, due to the imprecise nature of biological experiments \cite{libralon2009pre, vandaele2021stable}.
Other high-dimensional examples include noisy images \cite{buades2005non}, climate time series \cite{ertoz2003finding}, and neuron activity data \cite{friedman2015multistage}.

While noise may contribute little to individual dimensions, its overall contribution can be especially harmful to learning from data when it is high-dimensional.
When the data dimensionality grows, and \emph{the signal, here: `the absolute difference between ground truth distances in neighborhood queries'}, cannot cope with the dense noise that is accumulated at the same time, the data will lose its discriminative power for inferring the ground truth. 
This is formally explored in this paper.

Furthermore, dimensionality reduction methods are commonly used to alleviate the effect of noise on high-dimensional data and facilitate learning.
See for example Figure \ref{kNNrandom}, where the distances are much more useful for (topological) inference from a biological cell trajectory data set after a PCA projection.
Dimensionality reductions are also used to obtain more meaningful distances in applications such as spectral clustering \cite{liu2004spectral}, and even prior to other embedding methods such as t-SNE \cite{van2008visualizing}.
This suggests the need of a formal exploration of how dimensionality reductions themselves are susceptible to noise in high dimensions, which we provide in this paper.
While we only study this empirically for synthetic examples within the limited scope of this paper, the fact that \emph{our derived `phase shift' where neighbors become non-random is identical to the phase shift where common dimensionality reductions methods start performing well}, encourages further theoretical and methodological research into this subject.

\begin{figure}[t]
	\centering
	\includegraphics[width=.7\linewidth]{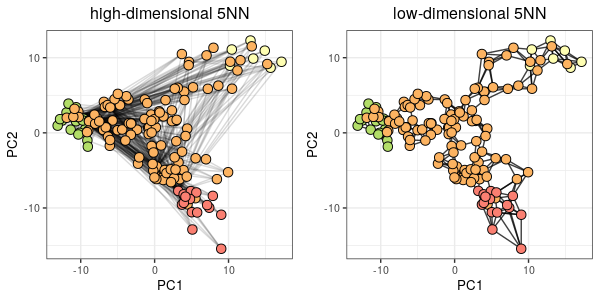}
	\caption{5NN graphs  (edges in black) of a real bifurcating biological cell trajectory (gene expression) data set $\mX+\mN \subseteq \mathbb{R}^{1770}$ consisting of four cell groups \cite{robrecht_cannoodt_2018_1443566, Saelens276907, vandaele2020mining}. 
	Each dimension quantifies the expression of a particular gene, and the coloring of each point (cell) corresponds to its cell group.
	The 5NN graphs are visualized through a 2-dimensional PCA embedding of $\mX+\mN$.
	(Left) The edges of the 5NN graph are obtained directly from the distances between the high-dimensional points.
	(Right) The edges of the 5NN graph are obtained from the 2-dimensional PCA embedding.
	The possible placement of points is much more constrained in two dimensions.
	This reduces unwanted behavior caused by noisy high-dimensional distances that impede trajectory inference, such as interconnections between different branches.
	The resulting lower-dimensional representation will be much more effective for learning the bifurcating model, as common in trajectory inference \cite{Saelens276907}.}
	\label{kNNrandom}
\end{figure}

Note that we will not introduce any novel algorithms in this paper. 
Yet, we do present and validate novel theoretical results about learning from high-dimensional point cloud data. 
These results complement previous work on distance concentration, and add to the understanding of counter-intuitive phenomena of high-dimensional data. 
We argue that \emph{such improved understanding is imperative for the design of better computational methods for the analysis of such data}.

Finally, we emphasize that \emph{the role of distance concentration in this paper is to be interpreted as rather subtle}.
In fact, this paper could be (and starting from Section \ref{characterize} is for a large part) written independent from distance concentration.
However, given its prevalence in related work, we found it important to include distance concentration in our motivation and discussion, and point out its differences to the view on `meaningfulness in distances' analyzed in this paper. 

\paragraph{Related Work}

It is well known that many distance measures lose their usefulness for discriminating between neighbors in high-dimensional data. 
This phenomenon, generally known as distance concentration, has been studied extensively on both a theoretical and experimental level \cite{beyer1999nearest,10.1007/3-540-44503-X_27, durrant2009nearest, kaban2012non, giannella2021instability}.
Its occurrence in data is widely perceived as an indicator that distances between observations are meaningless and neighborhood queries are unstable.

However, we argue that for the abundance of real word data with noise, \emph{`distances are meaningful when they are informative for the ground truth'} is a natural point of view.
Nevertheless, to the best of our knowledge, a formal probabilistic analysis of such characterization is lacking, as even extensive studies on the behavior of distances and neighbors in high-dimensional data \cite{angiulli2017behavior} do not include an analysis that explicitly separates a ground truth from a noise component.
By maintaining this strategy in this paper however, we make important conclusions that add to the understanding of high-dimensional data.
These include that our proposed view on meaningful distances cannot be characterized through distance concentration, and that neighborhood relations may still remain truthful even when distances are dominated by noise.

\paragraph{Contributions}

\begin{compactitem}
	\item We provide a probabilistic quantification of the effect of high-dimensional noise on neighboring relations, deriving conditions under which these relations either become highly random or non-random, independent of the magnitude of noise (Theorem \ref{CLTspecial} \& Corollary \ref{randomornot}).
	\item We provide thorough empirical verification of our theoretical results, and show that our novel yet natural view on meaningful distances is different from distance concentration (Section \ref{validate}).
	\item We use hyperharmonic series (Example \ref{hyperharmonic}) to develop experiments that directly link the performance of dimensionality reductions to the randomness of neighborhood relations (Section \ref{overfit}).
	\item We conclude on how our work provides better understanding of learning from noisy high-dimensional point cloud data (Section \ref{discandcon}).
\end{compactitem}

\section{Quantifying the Effect of Noise on High-Dimensional Neighbors}
\label{characterize}

In the first part of this section (Section \ref{whensection}), we provide a probabilistic quantification of the effect of noise on the absolute discrimination between high-dimensional neighbors (Theorem \ref{CLTspecial}), and use this to deduce conditions under which empirical neighbors become either highly random or likely truthful (Corollary \ref{randomornot}).
In particular, it will follow that these conditions are independent of the magnitude of noise in the dimensions.
Although we commence the analysis assuming we have three fixed points $\vx,\vy,\vz\in \mX$ in a given ground truth data set $\mX$, in Section \ref{liftToX} we also discuss how our obtained results can be used to derive more general results for $\mX$, such as Theorem \ref{liftToRandom}. 

\subsection{When Neighbors Become (non-)Random: a Case Study for Three Points}
\label{whensection}

Our setting in this section will be as follows.

\begin{itemize}
	\item We are given three sequences (which are to be interpreted as vectors) $\vx=x_1,x_2$,\ldots, $\vy=y_1,y_2,\ldots$, and $z=z_1, z_2,\ldots$.
	These correspond to the ground truth---and thus in practice---non-observed points.
	For $d\in\mathbb{N}^*$, $x_d$ equals the information captured by the $d$-th dimension of $\vx$ (analogous for $\vy,\vz$).
	If the model is explained by a finite number of dimensions, we can still regard $\vx,\vy,\vz$ as infinite sequences by letting $x_d=y_d=z_d$ (e.g. $=0$) for additional dimensions $d$.
	\item Rather than observing the vectors $\vx,$ $\vy$, and $\vz$, we observe $\vx+\vn_{\vx}$, $\vy+\vn_{\vy}$, and $\vz+\vn_{\vz}$.
	Here, $\vn_{\vx}$ is a realization of a sequence of random noise variables $\rvn_{\vx}=\rn_{x_1},\rn_{x_2},\ldots$ (analogous for $\vy,\vz$).
	We will assume the random variables $\bigcup_{d\in\mathbb{N}^*}\{\rn_{x_d},\rn_{y_d},\rn_{z_d}\}$ to be i.i.d, have finite fourth moment $\mu'_4$ (measuring the heaviness of the tail of the noise distribution), and be symmetric.
	While the former two assumptions will be required by the analysis, the latter simply makes it more convenient.
	Nevertheless, many common random noise distributions such as uniform and normal, are symmetric.
\end{itemize}

The following result should be interpreted as follows.
We are given a query point $\vx$ from a ground truth data set $\mX$, and two candidate neighbors $\vy$ and $\vz$ of $\vx$.
We want a formula expressing how likely neighborhood relations between $\vx$, $\vy$, and $\vz$, such as `\emph{$\vx$ is closer to $\vy$ than to $\vz$}', are preserved after introducing additive noise $\mN$, i.e., we observe $\mX+\mN$ rather than $\mX$.
This formula should be asymptotically valid, i.e., for a sufficiently high dimensionality $d$ of $\mX$.
Intuitively, the resulting probabilities will be in terms of the true distances $\|\vx-\vy\|$ and $\|\vx-\vz\|$, and noise characteristics, here $\sigma^2$ and $\mu'_4$.
Indeed, when there is not much difference between $\|\vx-\vy\|$ and $\|\vx-\vz\|$ (\emph{the signal}), or when $\sigma^2$ and $\mu'_4$ are large, we expect it to be more difficult to preserve neighborhood relations.
The following formula will then be used to derive subsequent insightful results in the rest of this paper.

\begin{theorem}
	\label{CLTspecial}
	Let $\vx=x_1,x_2,\ldots$, $\vy=y_1,y_2,\ldots$ and $\vz=z_1,z_2,\ldots$ be three sequences of real numbers.
	Let $\rvn_{\vx}=\rn_{x_1},\rn_{x_2},\ldots$, $\rvn_{\vy}=n_{y_1},n_{y_2},\ldots$, and $\rvn_{\vz}=n_{z_1},n_{z_2},\ldots$ be three sequences of jointly i.i.d.\ symmetric continuous random variables with variance $\sigma^2$ and finite $4$th moment $\mu'_4$.
	For a sequence $\vs$, denote $\vs^{(d)}$ for the vector composed from its first $d$ elements in order.
	Finally, let
	$$
	\Delta_\infty(d)\coloneqq \max\left\{\left\|\vx^{(d)}-\vy^{(d)}\right\|_\infty,\left\|\vx^{(d)}-\vz^{(d)}\right\|_\infty\right\}.
	$$
	If 
	\begin{align}
	\label{condition}
	\lim_{d\rightarrow\infty}\frac{\Delta_{\infty}(d)}{\sqrt[^{16}]{d}}=0,
	\end{align}
	then
	\scriptsize
		\begin{align*}
		&\left|P\left(\left\|\vx^{(d)}+\rvn^{(d)}_{\vx}-\vy^{(d)}-\rvn^{(d)}_{\vy}\right\|\leq \left\|\vx^{(d)}+\rvn^{(d)}_{\vx}-\vz^{(d)}-\rvn^{(d)}_{\vz}\right\|\right)\right.\\
		&\hspace{12.5em}\left.-\Phi\left(\zeta^{(d)}\left(\mu'_4,\sigma,\vx,\vy,\vz\right)\right)\right|\overset{d\rightarrow\infty}{\longrightarrow} 0,
		\end{align*}
	\normalsize
	where $\zeta^{(d)}:\left(\mathbb{R}^+\right)^2\times\left(\mathbb{R}^d\right)^3\rightarrow\mathbb{R}:$
	\begin{equation}
    \label{zeta}
	    {\small
	    \begin{aligned}
        \begin{pmatrix}
        \mu'_4\\
        \sigma\\
        \vx\\
        \vy\\
        \vz
        \end{pmatrix}
	    \mapsto\tfrac{\left\|\vx-\vz\right\|^2-\left\|\vx-\vy\right\|^2}{\sqrt{2d\left(\mu'_4+3\sigma^4\right) + 8\sigma^2\left(\left\|\vx-\vy\right\|^2+\left\|\vx-\vz\right\|^2-\left\langle \vx - \vy, \vx - \vz\right\rangle\right)}},
	\end{aligned}}
	\end{equation}
	and $\Phi$ is the cumulative distribution function of the standard normal distribution.
\end{theorem}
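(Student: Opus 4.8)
The plan is to reduce the comparison of Euclidean norms to a comparison of their squares, recognise the resulting quantity as a sum of independent (across dimension) summands, and apply a central limit theorem. Since both norms are nonnegative, the event inside the probability equals $\{W^{(d)}\ge 0\}$, where
\[
W^{(d)}\coloneqq\left\|\vx^{(d)}+\rvn^{(d)}_{\vx}-\vz^{(d)}-\rvn^{(d)}_{\vz}\right\|^2-\left\|\vx^{(d)}+\rvn^{(d)}_{\vx}-\vy^{(d)}-\rvn^{(d)}_{\vy}\right\|^2 .
\]
Writing $u_i\coloneqq x_i-y_i$, $v_i\coloneqq x_i-z_i$, $a_i\coloneqq n_{x_i}-n_{y_i}$ and $b_i\coloneqq n_{x_i}-n_{z_i}$, expansion gives $W^{(d)}=\sum_{i=1}^d R_i$ with $R_i=(v_i^2-u_i^2)+2(v_ib_i-u_ia_i)+(b_i^2-a_i^2)$. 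The summands are independent across $i$, since they depend on the disjoint noise triples $(n_{x_i},n_{y_i},n_{z_i})$; within a fixed dimension, however, $a_i$ and $b_i$ are dependent through the shared term $n_{x_i}$, and this within-dimension dependence is exactly what produces the inner-product term in $\zeta^{(d)}$.

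First I would compute the mean and variance of $W^{(d)}$ and verify they coincide with the numerator and squared denominator of $\zeta^{(d)}$. Using symmetry of the noise (all odd moments vanish) and the identities $\E[a_i^2]=\E[b_i^2]=2\sigma^2$, $\E[a_ib_i]=\sigma^2$, $\E[a_i^4]=\E[b_i^4]=2\mu'_4+6\sigma^4$ and $\E[a_i^2b_i^2]=\mu'_4+3\sigma^4$, one obtains $\E[W^{(d)}]=\|\vx^{(d)}-\vz^{(d)}\|^2-\|\vx^{(d)}-\vy^{(d)}\|^2$ and, because the linear and quadratic parts of $R_i$ are uncorrelated (their cross moment is odd, hence zero),
\[
\Var\!\left(W^{(d)}\right)=2d\left(\mu'_4+3\sigma^4\right)+8\sigma^2\!\left(\|\vx^{(d)}-\vy^{(d)}\|^2+\|\vx^{(d)}-\vz^{(d)}\|^2-\langle\vx^{(d)}-\vy^{(d)},\vx^{(d)}-\vz^{(d)}\rangle\right).
\]
Hence $\zeta^{(d)}=\E[W^{(d)}]/\sqrt{\Var(W^{(d)})}$ exactly, which is what makes the stated formula appear. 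Note also that $\Var(W^{(d)})\ge 2d(\mu'_4+3\sigma^4)\to\infty$, so the standard deviation $s_d\coloneqq\sqrt{\Var(W^{(d)})}$ satisfies $s_d=\Omega(\sqrt d)$.

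The core of the argument, and the hard part, is a central limit theorem for the standardised sum $Z_d\coloneqq(W^{(d)}-\E[W^{(d)}])/s_d$. The difficulty is that $R_i$ decomposes into a linear-in-noise part $L_i=2(v_ib_i-u_ia_i)$ and a quadratic-in-noise part $Q_i=b_i^2-a_i^2$: the $Q_i$ are i.i.d.\ across $i$ but possess only a finite second moment (the noise has only a finite fourth moment), so a Lyapunov bound applied to the combined $R_i$ would require nonexistent higher noise moments. I would therefore verify the Lindeberg condition directly, splitting each truncated contribution $\E[R_i^2\,\mathbf{1}\{|R_i|>\epsilon s_d\}]$ into the four pieces arising from $L_i$ and $Q_i$. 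The pure quadratic piece is controlled because the $Q_i$ are identically distributed with finite variance and $s_d\to\infty$ (dominated convergence); the pieces involving $L_i$ are controlled through its finite fourth moment, $\E[L_i^4]\le C(u_i^4+v_i^4)\le C\,\Delta_{\infty}(d)^2(u_i^2+v_i^2)$, together with $\sum_i(u_i^2+v_i^2)\le 2d\,\Delta_{\infty}(d)^2$ and $s_d=\Omega(\sqrt d)$; and the mixed piece $\E[Q_i^2\,\mathbf{1}\{|L_i|>\epsilon s_d\}]$ is handled by noting $|L_i|\le C\,\Delta_{\infty}(d)(|a_i|+|b_i|)$, so that the indicator forces $|a_i|+|b_i|$ beyond a threshold tending to infinity, after which a dominated-convergence estimate on $\E[(a_i^4+b_i^4)\mathbf{1}\{\cdots\}]$ applies. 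Each ratio is driven to zero precisely by the growth bound \eqref{condition}; tracking the exponents through these (deliberately crude) inequalities is what produces the $\sqrt[16]{d}$ rate, and it is the most delicate bookkeeping in the proof.

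Finally I would transfer the CLT to the moving threshold. Since $\Phi$ is continuous, the weak convergence $Z_d\Rightarrow N(0,1)$ upgrades by P\'olya's theorem to uniform convergence of distribution functions, $\sup_{t\in\mathbb{R}}|P(Z_d\le t)-\Phi(t)|\to 0$. Evaluating this at $t=-\zeta^{(d)}$ (which is legitimate even though $\zeta^{(d)}$ varies with $d$), and combining $P(W^{(d)}\ge0)=P(Z_d\ge-\zeta^{(d)})$ with $\Phi(-\zeta^{(d)})=1-\Phi(\zeta^{(d)})$, yields $|P(W^{(d)}\ge 0)-\Phi(\zeta^{(d)})|\to 0$, which is the claim. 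The only genuine obstacle is the CLT of the third step; the moment computations and the P\'olya transfer are routine once the decomposition into $L_i$ and $Q_i$ is in place.
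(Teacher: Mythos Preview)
Your proposal is correct and follows the same overall architecture as the paper: reduce to the difference of squared norms, compute its mean and variance (matching the numerator and squared denominator of $\zeta^{(d)}$), verify Lindeberg's condition for the independent but non-identically-distributed summands, apply the Lindeberg--Feller CLT, and then pass to the moving threshold via uniform convergence of the distribution functions (the paper isolates this last step as a separate lemma; you invoke P\'olya's theorem directly).

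The one substantive difference lies in how Lindeberg is verified. The paper does \emph{not} split the centred summand into a linear-in-noise part $L_i$ and a quadratic-in-noise part $Q_i$. Instead it shows that the event $\{|\rz_i-\E[\rz_i]|>\epsilon s_d\}$ is contained in $\{\max(|\rn_{x_i}|,|\rn_{y_i}|,|\rn_{z_i}|)>C_d(\epsilon)\}$ for an explicit threshold $C_d(\epsilon)\asymp s_d^{1/2}$, and then expands $\E\big[(\rz_i-\E[\rz_i])^2\mathbbm{1}_{\widetilde{\sA}_{i,\epsilon}}\big]$ term by term on this larger, symmetric event, using Markov's inequality on $P(|\rn|>C_d(\epsilon))$ to kill the residual $\delta_i^2(\vx,\vy)\delta_i^2(\vx,\vz)$ term. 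Your $L_i/Q_i$ decomposition is more modular and makes the role of the moment hypothesis transparent: the $Q_i$ are i.i.d.\ with finite variance, the $L_i$ have finite fourth moment, and the cross pieces are handled by dominating the indicator via $|L_i|\le 2\Delta_\infty(d)(|a_i|+|b_i|)$. In fact, if you track your own bounds carefully you will see they go through under $\Delta_\infty(d)=o(d^{1/4})$ rather than $o(d^{1/16})$; the $\sqrt[16]{d}$ rate is an artefact of the paper's particular Markov step, not of the problem, and the paper itself acknowledges in Remark~\ref{otherconditions} that weaker conditions suffice.
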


Letting
\begin{equation}
\label{Zformula}
    {\scriptsize
    \begin{aligned}
    \rz(d)&\coloneqq\left\|\rvn^{(d)}_{\vx}-\rvn^{(d)}_{\vy}+\vx^{(d)}-\vy^{(d)}\right\|^2\\
    &\hspace{2em}-\left\|\rvn^{(d)}_{\vx}-\rvn^{(d)}_{\vz}+\vx^{(d)}-\vz^{(d)}\right\|^2\\
    &=\sum_{i=1}^d\underbrace{\left[\left(\rn_{x_i}-\rn_{y_i}+x_i-y_i\right)^2-\left(\rn_{x_i}-\rn_{z_i}+x_i-z_i\right)^2\right]}_{\eqqcolon \rz_i},
    \end{aligned}}
\end{equation}
the proof of Theorem \ref{CLTspecial} is based on an application of the central limit theorem (CLT) to quantify the limiting behavior of $\rz(d)$.
However, the random variables $\rz_i$, $i=1,\ldots, d$, are not necessarily identically distributed, i.e., with the same mean and variance.
For this reason, unlike the analysis by \cite{beyer1999nearest, 10.1007/3-540-44503-X_27}, we require special conditions to ensure that the CLT remains applicable in our result.
To this end, (\ref{condition}) provides a sufficient condition for Linderberg's condition to be satisfied \cite{lindeberg1922neue}.
A full proof of Theorem \ref{CLTspecial} is provided in Appendix \ref{thandproofs}.

\begin{remark}
	\label{otherconditions}
	Following the proof in Appendix \ref{thandproofs}, condition (\ref{condition}) can be further weakened to:
    \begin{equation}
    \label{other2}
        {\scriptsize
	    \begin{aligned}
	    &\left(
	    \frac{\Delta^4_{\infty}(d)}{d+\left\|\vx^{(d)}-\vy^{(d)}\right\|\left\|\vx^{(d)}-\vz^{(d)}\right\|}\overset{d\rightarrow\infty}{\longrightarrow} 0\right){\normalsize \mbox{ and }} \left(\forall\epsilon>0\right)\\
	    &
	    \begin{pmatrix}
	    \frac{\splitfrac{F_{\rn}\left(-\left(d+\left\|\vx^{(d)}-\vy^{(d)}\right\|\left\|\vx^{(d)}-\vz^{(d)}\right\|\right)^{\frac{1}{4}}\epsilon\right)}{\times\min\left\{ d\Delta^4_{\infty}(d),\left\|\vx^{(d)}-\vy^{(d)}\right\|^2\left\|\vx^{(d)}-\vz^{(d)}\right\|^2\right\}}}{\displaystyle d+\left\|\vx^{(d)}-\vy^{(d)}\right\|\left\|\vx^{(d)}-\vz^{(d)}\right\|}\overset{d\rightarrow\infty}{\longrightarrow}0
	    \end{pmatrix},
	    \end{aligned}}
	\end{equation}
	where $F_{\rn}$ is the cumulative distribution function of the (marginal) random noise variable $\rn$.
	By making use of Markov's inequality, it can be straightforwardly shown that (\ref{condition}) $\implies$ (\ref{other2}).
	While this condition is less insightful than (\ref{condition}), it can be used to easily show that it suffices that $\Delta_{\infty}(d)=o\left(d^{\frac{1}{4}}\right)$ in the generic case that $\rn$ is bounded.
	Nevertheless, in the practical case that $\Delta_\infty(d)$ is bounded, i.e., when newly added dimensions are (eventually) at most as discriminating as the former, condition (\ref{condition}) is trivially satisfied.
	In Corollary \ref{randomornot} we will assume such bound, as it allows for a convenient way to `symmetrize' the asymptotic growth conditions formalized in this result.
\end{remark}

Under the same setting as for Theorem \ref{CLTspecial}, i.e., given a query point $\vx$ and two candidate (closest, furthest, \ldots) neighbors $\vy$ and $\vz$ of $\vx$, we can now study growth conditions on the signal---this being how well we can discriminate between $\vy$ and $\vz$ as the ground truth neighbors of $\vx$---under which the signal `beats' the noise in high dimensions and vice versa.
If the noise beats the signal, then the empirical neighborhood relations, i.e., those derived after additive noise is introduced, thus from the observed data, will be (nearly) completely random.
This is expressed by Corollary \ref{randomornot}.1 below.
In the opposite case, the signal beats the noise, and the empirical neighborhood relations will (likely) agree with those that would have been derived without noise, i.e., from the ground truth points $\vx$, $\vy$, and  $\vz$.
This is expressed by Corollary \ref{randomornot}.2.
The proofs of these results are provided in Appendix \ref{thandproofs}.

\begin{corollary}
	\label{randomornot}
	Let $\vx=x_1,x_2,\ldots$, $\vy=y_1,y_2,\ldots$ and $\vz=z_1,z_2,\ldots$ be three sequences of real numbers.
	Let $\rvn_{\vx}=\rn_{x_1},\rn_{x_2},\ldots$, $\rvn_{\vy}=\rn_{y_1},\rn_{y_2},\ldots$, and $\rvn_{\vz}=\rn_{z_1},\rn_{z_2},\ldots$ be three sequences of jointly i.i.d.\ symmetric continuous random variables with finite $4$th moment $\mu'_4$.
	Suppose further that $\sup_{d\in\mathbb{N}^*}\Delta_\infty(d)\leq C$ for some constant $C$.
	Then the following two statements are true.
	\begin{enumerate}
		\item If 
		$\left\|\vx^{(d)}-\vz^{(d)}\right\|^2-\left\|\vx^{(d)}-\vy^{(d)}\right\|^2=o\left(d^{\frac{1}{2}}\right)$, 
		\begin{align*}
			&\lim_{d\rightarrow\infty}P\left(\left\|\vx^{(d)}+\rvn^{(d)}_{\vx}-\vy^{(d)}-\rvn^{(d)}_{\vy}\right\|\leq\right.\\
			&\hspace{5em}\left.\left\|\vx^{(d)}+\rvn^{(d)}_{\vx}-\vz^{(d)}-\rvn^{(d)}_{\vz}\right\|\right)=\frac{1}{2}.
		\end{align*}
		\item If
		$d^{\frac{1}{2}}=o\left(\left\|\vx^{(d)}-\vz^{(d)}\right\|^2-\left\|\vx^{(d)}-\vy^{(d)}\right\|^2\right)$,
		\begin{align*}
		&\lim_{d\rightarrow\infty}P\left(\left\|\vx^{(d)}+\rvn^{(d)}_{\vx}-\vy^{(d)}-\rvn^{(d)}_{\vy}\right\|\right.\\
		&\hspace{5em}\left.\leq \left\|\vx^{(d)}+\rvn^{(d)}_{\vx}-\vz^{(d)}-\rvn^{(d)}_{\vz}\right\|\right)=1.
		\end{align*}
	\end{enumerate}
\end{corollary}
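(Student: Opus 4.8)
The plan is to reduce both statements to an asymptotic analysis of the argument $\zeta^{(d)}=\zeta^{(d)}(\mu'_4,\sigma,\vx,\vy,\vz)$ appearing in Theorem \ref{CLTspecial}. First I would observe that the standing hypothesis $\sup_{d}\Delta_\infty(d)\le C$ makes condition (\ref{condition}) trivially hold, as noted at the end of Remark \ref{otherconditions}; hence Theorem \ref{CLTspecial} applies and in both parts the probability of interest differs from $\Phi\left(\zeta^{(d)}\right)$ by a quantity that vanishes as $d\to\infty$. Writing this as $P=\Phi\left(\zeta^{(d)}\right)+o(1)$, it suffices to compute $\lim_d\Phi\left(\zeta^{(d)}\right)$, and since $\Phi$ is the standard normal CDF---continuous with $\Phi(0)=\tfrac12$ and $\lim_{t\to\infty}\Phi(t)=1$---everything reduces to locating the limit of $\zeta^{(d)}$.

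The key step is to pin down the growth rate of the denominator of $\zeta^{(d)}$, namely $D(d)\coloneqq\sqrt{2d(\mu'_4+3\sigma^4)+8\sigma^2\left(\left\|\vx^{(d)}-\vy^{(d)}\right\|^2+\left\|\vx^{(d)}-\vz^{(d)}\right\|^2-\left\langle\vx^{(d)}-\vy^{(d)},\vx^{(d)}-\vz^{(d)}\right\rangle\right)}$. For the lower bound I would show the bracketed term is nonnegative: writing $\vu=\vx^{(d)}-\vy^{(d)}$ and $\vv=\vx^{(d)}-\vz^{(d)}$, Cauchy--Schwarz together with AM--GM gives $\langle\vu,\vv\rangle\le\tfrac12\left(\|\vu\|^2+\|\vv\|^2\right)$, so the bracket is at least $\tfrac12\left(\|\vu\|^2+\|\vv\|^2\right)\ge 0$; hence $D(d)\ge\sqrt{2d(\mu'_4+3\sigma^4)}$, which is $\Omega(\sqrt{d})$ since $\mu'_4>0$ for a nondegenerate continuous distribution. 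For the upper bound I would invoke $\Delta_\infty(d)\le C$: every coordinate of $\vx-\vy$ and $\vx-\vz$ is bounded by $C$ in absolute value, so $\|\vu\|^2,\|\vv\|^2\le C^2 d$ and $|\langle\vu,\vv\rangle|\le C^2 d$ by Cauchy--Schwarz, making the entire radicand $O(d)$ and thus $D(d)=O(\sqrt{d})$. Together these give $D(d)=\Theta(\sqrt{d})$.

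With $D(d)=\Theta(\sqrt{d})$ in hand, both parts follow by squeezing. Setting $N(d)\coloneqq\left\|\vx^{(d)}-\vz^{(d)}\right\|^2-\left\|\vx^{(d)}-\vy^{(d)}\right\|^2$ for the numerator, Statement 1 uses only the lower bound: the hypothesis $N(d)=o(d^{1/2})$ yields $\left|\zeta^{(d)}\right|\le|N(d)|/\sqrt{2d(\mu'_4+3\sigma^4)}\to 0$, so $\zeta^{(d)}\to 0$ and $\Phi\left(\zeta^{(d)}\right)\to\Phi(0)=\tfrac12$. Statement 2 uses the upper bound: the hypothesis $d^{1/2}=o(N(d))$ forces $N(d)>0$ eventually with $N(d)/\sqrt{d}\to\infty$, so with $D(d)\le c_2\sqrt{d}$ we get $\zeta^{(d)}\ge N(d)/(c_2\sqrt{d})\to+\infty$ and hence $\Phi\left(\zeta^{(d)}\right)\to 1$. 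In each case $P\to\lim_d\Phi\left(\zeta^{(d)}\right)$ gives the claimed limit.

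I expect the only real work to be the two-sided control of $D(d)$: recognizing that the boundedness assumption $\sup_d\Delta_\infty(d)\le C$ is precisely what keeps the $\sigma^2$-dependent terms $O(d)$ and so holds the denominator at the same $\sqrt{d}$ scale as the leading noise term $2d(\mu'_4+3\sigma^4)$---preventing it from collapsing (which would break Statement 1) or growing faster than $\sqrt{d}$ (which would break Statement 2). Everything else is continuity and monotonicity of $\Phi$ plus elementary Cauchy--Schwarz estimates; the only other point deserving a line of care is the sign bookkeeping in Statement 2, ensuring $\zeta^{(d)}\to+\infty$ rather than $-\infty$.
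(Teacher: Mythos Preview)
Your proposal is correct and follows the same overall strategy as the paper: invoke Theorem~\ref{CLTspecial} (the boundedness of $\Delta_\infty$ makes condition~(\ref{condition}) trivial) and then analyze the limiting behavior of $\zeta^{(d)}$. For Part~1 your argument is exactly what the paper means by ``immediate consequence.''

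For Part~2 there is a genuine stylistic difference worth noting. You establish directly that the denominator $D(d)=\Theta(\sqrt{d})$ by bounding each of $\|\vu\|^2$, $\|\vv\|^2$, $|\langle\vu,\vv\rangle|$ by $C^2d$, and then compare against the numerator $N(d)$ with $N(d)/\sqrt{d}\to\infty$. The paper instead divides both numerator and denominator of $(\zeta^{(d)})^{-2}$ by $N(d)^2$ and controls the bracketed term via the factorization $\|\vv\|^2-\|\vu\|^2=(\|\vv\|-\|\vu\|)(\|\vv\|+\|\vu\|)$, ultimately reducing to showing $\|\vv\|-\|\vu\|\to\infty$; the bound $\|\vu\|+\|\vv\|\le 2C\sqrt{d}$ enters only at that last step. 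Both routes hinge on the same use of $\sup_d\Delta_\infty(d)\le C$ to keep norms at scale $\sqrt{d}$, but yours is more symmetric with Part~1 and arguably cleaner, while the paper's rewriting isolates the quantity $\|\vv\|-\|\vu\|$, which could be of independent interest if one wanted finer control.
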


\begin{remark}
	\label{interesting}
	The following conclusions---which will all be validated in Section \ref{experiments}---can now be made.
	
	\begin{enumerate}
		\item $\Delta_2(d)\coloneqq\max\left\{\left\|\vx^{(d)}-\vy^{(d)}\right\|,\left\|\vx^{(d)}-\vz^{(d)}\right\|\right\}$ has to grow at least as $d^\frac{1}{4}$, otherwise neighbors will become highly random.
		Indeed, if $\Delta_2(d)=o\left(d^\frac{1}{4}\right)$, then condition (\ref{other2}) in Remark \ref{randomornot} is satisfied, and Corollary \ref{randomornot}.1 becomes applicable (note that the assumed bound $\sup_{d\in\mathbb{N}^*}\Delta_\infty(d)\leq C$ is only required for Corollary \ref{randomornot}.2, see also Appendix \ref{thandproofs}).
		\item The noise characteristics $\sigma^2$ and $\mu'_4$ naturally have a direct effect on the usefulness of empirical neighbors for any fixed dimension $d$, which is intuitively clear, and can be seen from (\ref{zeta}).
		However, under the conditions of Corollary \ref{randomornot}, their magnitude becomes negligible in large dimensions.
		\item Even if the distances between the noise vectors are dominant in the distances between the observed data points, neighbors may become non-random, i.e., representative for the ground truth neighbors.
		For example, if the random noise variable $\rn$ is uniformly distributed, then the expected distances between two noise vectors grows as $\sqrt{d}$, whereas it is sufficient for the absolute differences between ground truth distances to grow as $\sqrt[^4]{d}+\epsilon$ for some $\epsilon > 0$ to ensure that the noise becomes unlikely to effect neighbors in sufficiently high dimensions.
	\end{enumerate}
\end{remark}

The following example will prove to be very useful in the experiments (Section \ref{experiments}).

\begin{example}
	\label{hyperharmonic}
	Let $\vx=\vy$ be the sequences of all zeros.
	Given $\alpha \in \mathbb{R}_{>2}\cup\{\infty\}$, we define the sequence of reals $\vz(\alpha)=z_1(\alpha),z_2(\alpha),\ldots$, by letting for $d\in\mathbb{N}^*$, 
	$$
	z_d(\alpha)\coloneqq
	\begin{cases}
	\frac{1}{\sqrt[^\alpha]{d}}&\mbox{if }\alpha\in\mathbb{R}_{>2};\\
	1&\mbox{if }\alpha=\infty.
	\end{cases}
	$$
	Observe that we have $\Delta_{\infty}(d)=\left\|\vz^{(d)}(\alpha)\right\|_{\infty}=1$ for all $d\in\mathbb{N}^*$.
	We furthermore find that for $\alpha\in\mathbb{R}_{>2}$, $\left\|\vz^{(d)}(\alpha)\right\|^2$ defines a \emph{hyperharmonic series}, which for large $d\in\mathbb{N}^*$ can be approximated as
	\begin{align}
	    \label{alpha}
	    \begin{split}
	    &\left\|\vz^{(d)}(\alpha)\right\|^2=\sum_{k=1}^d \frac{1}{\sqrt[^\alpha]{k^2}}\\
	    &\hspace{2.5em}\overset{d\rightarrow\infty}{\sim}\int_{1}^d x^{-\frac{2}{a}}\mathrm{d}x=\left(\frac{\alpha}{\alpha-2}\right)\left(d^{1-\frac{2}{\alpha}}-1\right),
	    \end{split}
    \end{align}
	i.e., $\left\|\vz^{(d)}(\alpha)\right\|^2$ grows as $d^{1-\frac{2}{\alpha}}$.
	With by convention $\frac{1}{\infty}=0$, this holds for $\alpha=\infty$ as well.
	Naturally, in any dimension $\vy$ is always closer to $\vx$ than $\vz$ is.
	Due to Corollary \ref{randomornot}, the probability that this remains true under noise in high dimensions satisfies
    \begin{align*}
    	&P\left(\left\|\vx^{(d)}+\rvn^{(d)}_{\vx}-\vy^{(d)}-\rvn^{(d)}_{\vy}\right\|\right.\\
    	&\hspace{3em}\left.\leq \left\|\vx^{(d)}+\rvn^{(d)}_{\vx}-\vz^{(d)}-\rvn^{(d)}_{\vz}\right\|\right)\\
    	&\hspace{7em}\overset{d\rightarrow\infty}{\longrightarrow}
    	\begin{cases}
    	\frac{1}{2}&\alpha<4;\\
    	1&\alpha>4;\\
    	\Phi\left(\sqrt{\frac{2}{\mu'_4+3\sigma^4}}\right)&\alpha = 4.
    	\end{cases}
    \end{align*}
	The last limit can be found from (\ref{alpha}) by adapting the proof of Corollary \ref{randomornot}.2 in Appendix \ref{thandproofs}.
	Thus, $\alpha=4$ corresponds to a `\emph{phase shift}', where neighbors transition between becoming random or non-random.
\end{example}

In the following section, we discuss how our results can be used to derive more general results for larger data $\mX$, consisting of more than three points $\vx,\vy$, and $\vz$.

\subsection{Randomness in Neighbors for Data Sets of Arbitrary Sizes}
\label{liftToX}

In the previous section we restricted to the particular scenario where we have three given points $\vx$, $\vy$, and $\vz$.
Naturally, we can also study the effectiveness of deriving neighborhood relations in a data set $\mX$ of arbitrary size under the effect of noise in high dimensions.
The reason for this is that in practice, we deal with a finite number of data points.
Therefore, more general mathematical results on preserving neighbors under noise may often be derived directly from the results presented in Section \ref{whensection}.

One such example is as follows, providing sufficient growth conditions on the ground truth diameter for the noise to cause neighboring relations to become random, or thus necessary conditions for the noise not to cause this (which is what we want to achieve in practice).
The idea here is that when for every point $\vx\in \mX$, if in the empirical noisy data it is completely random whether $\vx$'s true furthest neighbor becomes closer to $\vx$ than $\vx$'s true closest neighbor, i.e., this occurs with probability $\frac{1}{2}$, then one can essentially not work effectively with the high-dimensional neighbors.

\begin{theorem}
	\label{liftToRandom}
	Let $\mX=\vv_1,\vv_2,\ldots$ be a sequence of column vectors, for which $\vv_d\in\mathbb{R}^{n}$, $n\in\mathbb{N}^*$, and denote by $\mX^{(d)}$ the matrix in $\mathbb{R}^{n\times d}$ composed of the first $d$ vectors $\vv_d$ in order.
	For $d\in\mathbb{N}^*$, and $i=1,\ldots,n$, we identify the $i$-th row of the matrix $\mX^{(d)}$ with the point $\vx^{(d)}_i\in \mX^{(d)}$.
	Let furthermore $\{\rvn_{i}=\rn_{i_1},\rn_{i_2},\ldots\}_{i\in\{1,\ldots,n\}}$ be a collection of jointly i.i.d.\ symmetric continuous random variables with finite $4$th moment.
	For $d\in\mathbb{N}^*$ and each point $\vx^{(d)}_i\in \mX^{(d)}$, let $\vx^{(d)}_{i,\min}$ denote the closest neighbor of $\vx^{(d)}_i$ in $\mX^{(d)}$, $\vx^{(d)}_{i,\max}$ the furthest neighbor of $\vx^{(d)}_i$ in $\mX^{(d)}$, and ${\bf{\Delta_2}}(d)\coloneqq\max_{\vx,\vy\in \mX^{(d)}}\|\vx-\vy\|$ the diameter of $\mX^{(d)}$.
	If ${\bf{\Delta_2}}(d)=o\left(d^{\frac{1}{4}}\right)$, then
	\begin{align*}
	{\small
	    \begin{aligned}
	    &\sup_{i=1,\ldots,n}P\left(\left\|\vx^{(d)}_i+\rvn^{(d)}_i-\vx^{(d)}_{i,\max}-\rvn^{(d)}_{\vx^{(d)}_{i,\max}}\right\|\right.\\
	    &\hspace{5.5em}\left.\leq\left\|\vx^{(d)}_i+\rvn^{(d)}_i-\vx^{(d)}_{i,\min}-\rvn^{(d)}_{\vx^{(d)}_{i,\min}}\right\|\right)\overset{d\rightarrow\infty}{\longrightarrow}\frac{1}{2}.
        \end{aligned}}
	\end{align*}
\end{theorem}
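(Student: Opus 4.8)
The plan is to reduce Theorem~\ref{liftToRandom} to finitely many applications of Corollary~\ref{randomornot}.1, the three-point case. First I would fix three indices $i,j,k\in\{1,\ldots,n\}$ and treat $\vx_i,\vx_j,\vx_k$ as the three fixed real sequences, with associated noise sequences $\rvn_i,\rvn_j,\rvn_k$. Being members of the collection $\{\rvn_i\}_{i=1}^n$, these three sequences are jointly i.i.d., symmetric, continuous and have finite fourth moment, so the standing noise hypotheses of Corollary~\ref{randomornot} are inherited. Moreover $\max\{\|\vx_i^{(d)}-\vx_j^{(d)}\|,\|\vx_i^{(d)}-\vx_k^{(d)}\|\}\leq\bm{\Delta_2}(d)=o(d^{1/4})$, so the triple-level quantity $\Delta_2(d)$ of Remark~\ref{interesting} is $o(d^{1/4})$; by Remark~\ref{interesting}.1 this makes condition~(\ref{other2}) hold, and since $|\,\|\vx_i^{(d)}-\vx_k^{(d)}\|^2-\|\vx_i^{(d)}-\vx_j^{(d)}\|^2\,|\leq\bm{\Delta_2}(d)^2=o(d^{1/2})$ the signal hypothesis of Corollary~\ref{randomornot}.1 is met as well. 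Writing $p_{ijk}(d)$ for the probability that $\|\vx_i^{(d)}+\rvn_i^{(d)}-\vx_j^{(d)}-\rvn_j^{(d)}\|\leq\|\vx_i^{(d)}+\rvn_i^{(d)}-\vx_k^{(d)}-\rvn_k^{(d)}\|$, Corollary~\ref{randomornot}.1 then yields $p_{ijk}(d)\to\tfrac{1}{2}$.

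The step I expect to be the real obstacle is that the closest and furthest neighbors $\vx_{i,\min}^{(d)}$ and $\vx_{i,\max}^{(d)}$ depend on $d$, so one cannot simply invoke the three-point result for a single fixed triple. The resolution is to exploit that $\mX^{(d)}$ has only $n$ points and that these neighbors are determined by the \emph{noise-free} ground-truth distances: for each pair $(d,i)$ there are therefore deterministic indices $j(d,i),k(d,i)\in\{1,\ldots,n\}$ with $\vx_{i,\max}^{(d)}=\vx_{j(d,i)}^{(d)}$ and $\vx_{i,\min}^{(d)}=\vx_{k(d,i)}^{(d)}$. Matching $\vy$ with the furthest and $\vz$ with the closest neighbor, the probability appearing in the theorem equals $p_{i,j(d,i),k(d,i)}(d)$ for every $d$ and $i$.

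Finally I would upgrade the finitely many pointwise limits to a uniform one. The family $\{p_{ijk}\}_{i,j,k}$ consists of at most $n^3$ sequences, each converging to $\tfrac{1}{2}$, so for every $\epsilon>0$ a single threshold $D$ serves all of them, i.e.\ $|p_{ijk}(d)-\tfrac{1}{2}|<\epsilon$ whenever $d\geq D$, simultaneously over all triples. Since for each $i$ the realized pair $(j(d,i),k(d,i))$ is one of these triples, $\sup_{i}|p_{i,j(d,i),k(d,i)}(d)-\tfrac{1}{2}|<\epsilon$ for all $d\geq D$, which is exactly the claimed $\sup_i p_{i,j(d,i),k(d,i)}(d)\to\tfrac{1}{2}$. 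The only delicate ingredient is the $d$-dependence of the neighbors handled above; once it is absorbed into the finiteness of $\mX^{(d)}$, the rest is a direct packaging of Corollary~\ref{randomornot}.1, and I would only note in passing that $\vx_{i,\min}^{(d)}$ and $\vx_{i,\max}^{(d)}$ are genuine neighbors distinct from $\vx_i^{(d)}$ (and from each other once $n\geq 3$) so that the triples are non-degenerate.
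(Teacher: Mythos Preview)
Your proposal is correct and follows essentially the same approach as the paper: apply Corollary~\ref{randomornot}.1 (via Remark~\ref{interesting}.1) to every ordered triple of rows, then use that there are only finitely many such triples to absorb the $d$-dependence of $\vx^{(d)}_{i,\min}$ and $\vx^{(d)}_{i,\max}$ into a uniform limit. The paper's proof states exactly this in two sentences, whereas you have spelled out the verification of the hypotheses and the uniformity argument explicitly; no substantive difference.
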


\begin{proof}
	Corollary \ref{randomornot}.1 is valid for every triple of rows in $\mX$ (see also Remark \ref{interesting}.1).
	Hence, the result follows from the fact that there are only finitely many such triples (note that the row indices of $\vx^{(d)}_{i,\min}$ and $\vx^{(d)}_{i,\max}$ in $\mX^{(d)}$ are allowed to vary with $d$).
\end{proof}

Theorem \ref{liftToRandom} essentially states the conditions we must avoid for any practical application that relies on the distances between data observations.
Conversely, the conditions we should aim for are generally more dependant on the application of interest. 
Indeed, for many practical purposes it may not be important to preserve \emph{all} neighborhood relations in the data.
For example, as will be validated in Section \ref{overfit}, for clustering algorithms a sufficient condition for cluster assignments to be likely truthful would be that Corollary \ref{randomornot}.2 is satisfied for every triple $(\vx,\vy,\vz)$, where $\vx$ and $\vy$ belong to the same (ground truth) cluster and $\vz$ to a different cluster.
Under the effect of noise, any two points from the same ground truth cluster then likely remain closer to each other than to any point from another cluster.
However, for particular methods such as single-linkage clustering, this would be too stringent, and more in-depth analysis will be required.
Thus, we will not claim one such generally applicable result in the current paper.
Nevertheless, the consensus remains that to overcome the impact of the extra noise, adding dimensions should be accompanied with adding sufficient information to discriminate between important neighbors according to the ground truth.

\section{Experimental Results}
\label{experiments}

In this section we conduct experiments that aim to improve one's understanding and intuition about working with distances in noisy high-dimensional data.
First, Section \ref{validate} will be devoted to empirical validation of our theoretical results.
Section \ref{overfit} will be devoted to linking the performance of common dimensionality reduction methods as well as spectral clustering to randomness of neighborhood relations.
While our empirical observations in Section \ref{overfit} cannot be immediately derived from our theoretical results in their current stage, they point out direct and interesting connections between the performance of machine learning algorithms and the signal-to-noise ratio as formalized in this paper, which encourage further research into this subject.
Code for this project is available on \url{https://github.com/robinvndaele/NoisyDistances}.

\subsection{Validation of the Theoretical Results}
\label{validate}

\paragraph{Validation of Theorem \ref{CLTspecial}}

We constructed three sets of three sequences $\vx$, $\vy$, and $\vz$ containing up to \num{10000} dimensions.
For each set, we let $\vx=\vy=\vzero$. 
$\vz$ is used to control ground truth distance growth rates, here measured through the $l_2$ and $l_\infty$ norm, as follows.

\begin{compactitem}
	\item[1.] \textbf{$\bm{l_2}$ bounded, $\bm{l_\infty}$ bounded:} $\vz=(1,0,\ldots,0)$.
	\item[2.] \textbf{$\bm{l_2}$ unbounded, $\bm{l_\infty}$ bounded:} $\vz=(1,1,\ldots,1)$.
	\item[3.] \textbf{$\bm{l_2}$ unbounded, $\bm{l_\infty}$ unbounded:} $z_d=d^{\frac{1}{4}-0.01}$.
\end{compactitem}

\begin{figure}[t!]
	\centering
	\includegraphics[width=.5\linewidth]{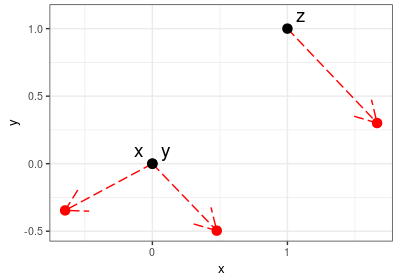}
	\caption{The experiment setup visualized in 2D.
		We have three points $\vx=\vy$ and $\vz$, where $\vz$ controls the true discrimination growth rate.
		Theorem \ref{CLTspecial} quantifies how likely $\vx$ will remain closer to $\vy$ than to $\vz$ when corrupted by noise in high dimensions, here illustrated by the displacements in red, for various growth rates.}
	\label{setupnoise}
	\includegraphics[width=\linewidth]{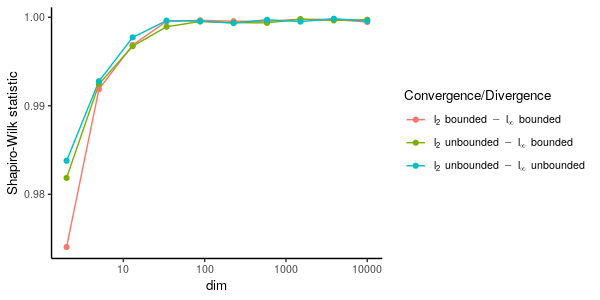}
	\caption{Shapiro–Wilk test statistics to assess normality of $\ry(d)$ for the various growth rates determined by $\vz$, according to the data dimension $d$.
	The convergence of the curves to 1 agrees that Theorem \ref{CLTspecial} is applicable to all of the growth rates, and thus allows us to quantify the randomness of neighborhood relations between $\vx,\vy$, and $\vz$ caused by noise.}
	\label{CheckCLT}
    \includegraphics[width=.9125\linewidth]{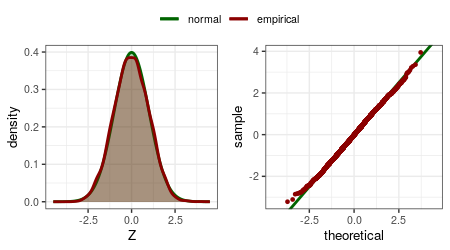}
	\caption{Density plot and Q-Q plot of $\ry(\num{10000})$ (empirical) compared to the standard normal distribution.}
	\label{NormPlotHigh}
\end{figure}

Then for each sequence and in each dimension we added uniform noise $\rn\sim \mathrm{U}[-0.75, 0.75]$, 
for which $\sigma^2=\frac{0.75^2}{3}$ and $\mu'_4=\frac{0.75^4}{5}$.
The setup for this experiment is illustrated by Figure \ref{setupnoise}. 
Since $\rn$ is bounded in each dimension, Theorem \ref{CLTspecial} should be applicable to all three cases (see also Remark \ref{otherconditions}).
More precisely, for sufficiently large $d$ we should find that $\ry(d)\coloneqq\frac{\rz(d)-\mu\left(\rz(d)\right)}{\sigma\left(\rz(d)\right)}\overset{\mathrm{approx.}}{\sim}N(0,1)$, where $\rz(d)$ is as defined in (\ref{Zformula}), and $\mu\left(\rz(d)\right)$ is the negative nominator and $\sigma\left(\rz(d)\right)$ the denominator of $\zeta^{(d)}\left(\mu'_4,\sigma,\vx,\vy,\vz\right)$ in (\ref{zeta}).
We verified this through 5000 samples of $\ry(d)$ for each of the three ground truth growth rates and various dimension $d$ chosen from a log-scale (replicating the noise outcomes).
We used the Shapiro–Wilk test to assess normality.
The results are shown in Figure \ref{CheckCLT}, confirming that Theorem \ref{CLTspecial} is indeed applicable to all of the considered growth rates.
Since the Shapiro–Wilk test is developed to asses normality, but not  \emph{standard} normality, Figure \ref{NormPlotHigh} shows normality plots of $\ry(\num{10000})$ for the second set of sequences, confirming the correctness of our calculations.

\paragraph{Validation of Corollary \ref{randomornot}}

We will use Example \ref{hyperharmonic} to analyze for which growth rates of the true discrimination between neighbors (the signal), empirical neighbors become random or not.
For this, we considered various sets of three sequences $\vx,\vy$, and $\vz$, where $\vx=\vy=\vzero$, and $\vz=\vz(\alpha)$ controls the growth rate as determined by $\alpha$ in Example \ref{hyperharmonic}.
We sampled noise using a uniform distribution $\rn\sim \mathrm{U}[-1.25, 1.25]$.
A higher magnitude of noise is chosen here to better illustrate that $0<\Phi_4\coloneqq\Phi\left(\sqrt{\tfrac{2}{\mu'_4+3\sigma^4}}\right)<1$ (for $\alpha\neq 4$ the magnitude will not matter in the limit).
The setup of this experiment is again visualized by Figure \ref{setupnoise}.
We used 5000 noise replicates to approximate a variety of expected values and probabilities for different growth rates determined by $\alpha\in\{2,3,4,5,6,\infty\}$.
These are illustrated on Figure \ref{AlphaExp}, and defined as follows.

\begin{figure}[b!]
    \centering
	\includegraphics[width=\linewidth]{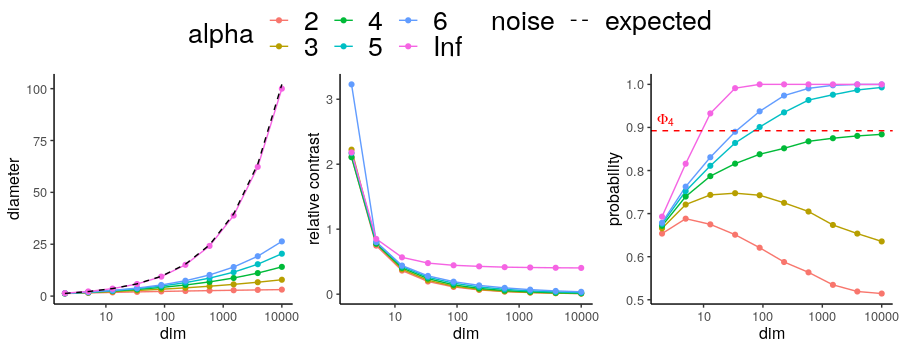}
	\caption{(Left) The expected growth rate of the distance between two noise vectors compared to the ground truth model diameter growth rates determined by $\alpha$.
		(Middle) The expected relative contrast converges to 0 for all considered values $\alpha<\infty$, meaning that the two closest points are expected to be relatively as distant to each other as the two furthest points.
		(Right) The limiting behavior of $P\left(\left\|\vx+\rvn_{\vx}-\vy-\rvn_{\vy}\right\|\leq \left\|\vx+\rvn_{\vx}-\vz-\rvn_{\vz}\right\|\right)$.
		Under noise in high dimensions, neighbors will be inferred effectively for $\alpha>4$, i.e., empirical neighbors will likely be true, whereas for $\alpha<4$, they become meaningless.}
	\label{AlphaExp}
\end{figure}

\begin{enumerate}
	\item The expected distance between two noise vectors, compared to the ground truth diameter growth rates, i.e., of $\|\vz\|$ (Figure \ref{AlphaExp}, Left).
	\item The expected \emph{relative contrast} (Figure \ref{AlphaExp}, Middle) \cite{10.1007/3-540-44503-X_27}:
	$$\frac{\max_{\vp,\vq\in\{\vx,\vy,\vz\}}\|\vp+\rvn_{\vp}-\vq-\rvn_{\vq}\|}{\min_{\vp,\vq\in\{\vx,\vy,\vz\}}\|\vp+\rvn_{\vp}-\vq-\rvn_{\vq}\|}-1.$$
	A relative contrast near 0 indicates distance concentration (discussed in Section \ref{intro}).
	\item The probability (Figure \ref{AlphaExp}, Right)
	$$P\left(\left\|\vx+\rvn_{\vx}-\vy-\rvn_{\vy}\right\|\leq \left\|\vx+\rvn_{\vx}-\vz-\rvn_{\vz}\right\|\right).$$
\end{enumerate}

First, we observe that the distances between the noise vectors is expected to become indefinitely larger than the distances between the ground truth points for $\alpha<\infty$ (Figure \ref{AlphaExp}, Left).
Hence, we would intuitively expect the distances between the noise vectors to play a dominant role in the observed empirical distances for the corresponding growth rates.
Second, we observe that also for all considered growth rates determined by $\alpha<\infty$, the expected relative contrast converges to $0$ (Figure \ref{AlphaExp}, Middle).
This means that in sufficiently high dimensions, the two closest points are expected to be relatively as distant to each other as the two furthest points.
If one would interpret this as neighborhood queries to become meaningless and unstable---as argued by \cite{beyer1999nearest, 10.1007/3-540-44503-X_27}---according to our view discussed in Section \ref{intro}, this should result in a lot of randomness in the chosen closest neighbor of the noisy observation representing $\vx$ in the high-dimensional space for all considered $\alpha\in\{2,\ldots,6\}$.
However, as discussed in Example \ref{hyperharmonic}, this will not be the case whenever $\alpha > 4$, as $\vx$ will very likely correctly choose $\vy$ as its neighbor even when this choice is affected by noise in high dimensions.
This is confirmed by the empirical probabilities (Figure \ref{AlphaExp}, Right), which furthermore agree with all limits obtained in Example \ref{hyperharmonic} from Corollary \ref{randomornot}.

\subsection{Learning with Random Neighbors}
\label{overfit}

\paragraph{Dimensionality Reduction} As also discussed in Section \ref{intro}, dimensionality reductions are commonly applied for preprocessing high-dimensional data that is corrupted by noise.
The obtained distances in the lower-dimensional space are then assumed to be more informative for inference and machine learning (Figure \ref{kNNrandom}).
This raises the question whether dimensionality reductions can naturally accommodate the effect of noise on high-dimensional neighboring relations.

To investigate this, consider a ground truth data set $\mX$ of $n$ evenly spaced points on the line segment $\mathcal{L}$ from the origin to $\vz^{(d)}$ in $\mathbb{R}^d$, where $\vz=\vz(\alpha)$ is as defined in Example \ref{hyperharmonic} by fixing some $\alpha\in \mathbb{R}_{>2}\cup\{\infty\}$ (Figure \ref{setupoverfit}, Left).
Since these points are evenly spaced on $\mathcal{L}$, the growth rate of all squared distances (and the differences between them) will be identical to the growth rate of $\|\vz^{(d)}\|^2$, up to some constant factor depending on the fixed ground truth ordering of the considered points.
Thus, from Corollary \ref{randomornot} we find that under the effect of noise in high dimensions, \emph{all} empirical neighbors will become random for $\alpha<4$, and \emph{all} empirical neighbors will likely remain truthful for $\alpha>4$.

\begin{figure}[b!]
	\centering
	\includegraphics[width=\linewidth]{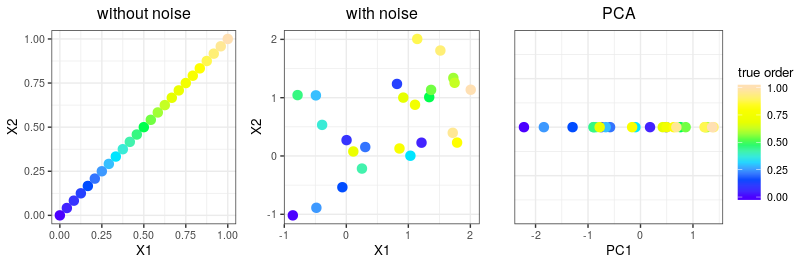}
	\caption{(Left) $n=25$ ground truth points, which make up $\mX$, are evenly spaced on a linear model $\mathcal{L}\subseteq\mathbb{R}^d$.
	(Middle) Rather than observing $\mX$, we observe $\mX+\mN$ for a random noise matrix $\mN$.
	(Right) A 1-dimensional PCA dimensionality reduction aims to retrieve the true ordering of the points on $\mathcal{L}$.}
	\label{setupoverfit}
	\includegraphics[width=\linewidth]{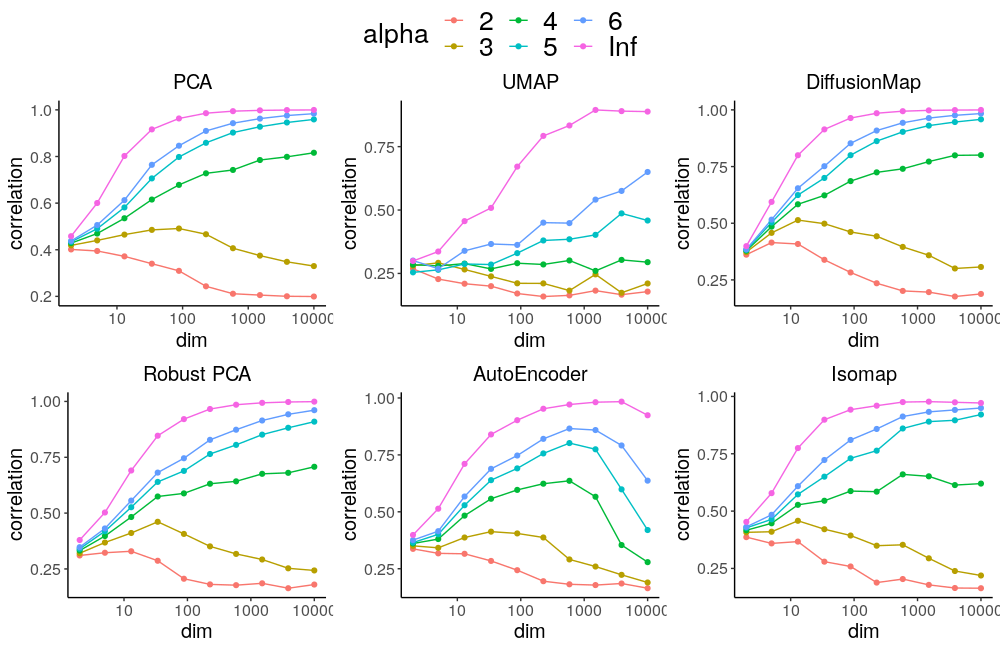}
	\caption{The performance of six common dimensionality reduction methods for recovering the ground truth ordering of the points on $\mathcal{L}$ under the effect of noise, by dimension and signal growth rate.}
	\label{expdimred}
\end{figure}

For a 1D-dimensionality reduction method $f$, we can now study how well $f$ is able to recover neighboring relations of $\mX$ from $\mX+\mN$, with $\mN$ a random noise matrix.
For this, we look at the correlation between the ordering of points on $\mX$ and on $f(\mX+\mN)$ (Figure \ref{setupoverfit}).
Furthermore, we investigate this for six different dimensionality reduction methods that are commonly used for noise or feature size reduction prior to visualization, (topological) inference, or clustering: PCA \cite{wold1987principal, van2008visualizing, slingshot, cannoodt2016computational}, UMAP \cite{mcinnes2018umap}, diffusion maps \cite{coifman2006diffusion, vandaele2020mining, cannoodt2016computational}, robust PCA \cite{candes2011robust} (a variant of PCA that assumes the data is composed in a low-rank component $\mX$ and a \emph{sparse} noise component $\mN$), a basic autoencoder \cite{kramer1991nonlinear, vincent2010stacked} with 5 hidden layers and $\tanh$ activation, and Isomap \cite{tenenbaum2000global}.
We evaluated their performances for $n=25$ points, up to $d=\num{10000}$ dimensions, growth rates determined by $\alpha\in\{2,3,4,5,6,\infty\}$, and averaged over 100 noise replicates from $\rn\sim \mathrm{U}[-1.25, 1.25]$ per dimension.
The autoencoder was built in Python. 
Other models ran under standard settings in \textsf{R} (with 10 neighbors instead of 50 for Isomap).
Figure \ref{expdimred} shows the results.

We consistently observe that the performances increase by dimension for $\alpha>4$, and decrease for $\alpha < 4$.
Following the previous results (Figure \ref{AlphaExp}, Right), this provides empirical evidence that the performance of these common dimensionality reduction methods is directly affected by whether noise causes randomness in high-dimensional neighborhood relations or not.
This thus suggests that these methods themselves may be susceptible to the noise they aim to reduce.
These observations are only contradicted by the autoencoder, which showed convergence issues for larger dimensions. 

Finally, the case $\alpha=4$ deserves special interest.
Since in a practical setting additional distributional conditions of $\mX$ will likely result in some non-extreme degree of randomness in the empirical neighborhood relations, we observe that this may be reflected in the performance of dimensionality reductions as well.

\paragraph{Spectral Clustering} Spectral clustering uses the spectrum of a similarity matrix from the data to perform a dimensionality reduction, prior to clustering the data in fewer dimensions \cite{filippone2008survey}.
Naturally, when the dimensionality reduction is affected by noise, so will the consecutive clustering performance.

\begin{figure}[b!]
	\centering
	\includegraphics[width=\linewidth]{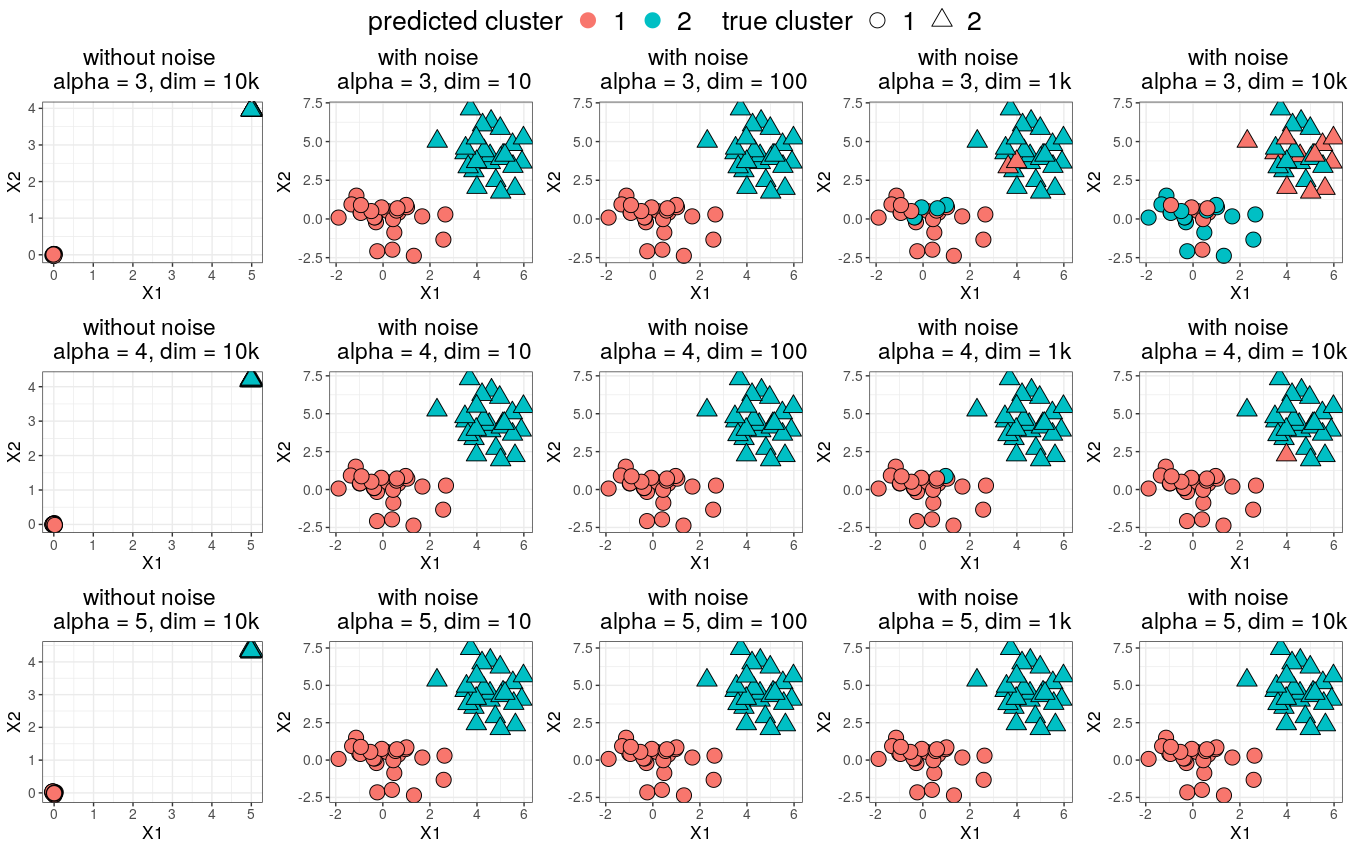}
	\caption{The results of spectral clustering for various input dimensionalities and values $\alpha$ that control the separation of two ground truth clusters in high dimensions, with and without Gaussian noise.
	Each plot is the restriction of the data to its first two coordinates.}
	\label{Clusters}
\end{figure}

To illustrate this, we constructed ground truth clusters by randomly sampling 25 data points from each of two $\num{10000}$-dimensional hyperballs $\mathcal{S}_0\coloneqq B\left(\vzero, \frac{\|\vc(\alpha)\|}{5}\right)$ and $\mathcal{S}_c\coloneqq B\left(\vc(\alpha), \frac{\|\vc(\alpha)\|}{5}\right)$, where $\vc(\alpha)$ is defined by
$$
    c_d(\alpha)=
    \begin{cases}
        \frac{5}{\sqrt[^\alpha]{d}}&\mbox{if }d\leq 2;\\
        \frac{1}{\sqrt[^\alpha]{d}}&\mbox{if }d>2.
    \end{cases}
$$
Hence, $\|\vc^{d}(\alpha)\|$ grows exactly as $\|\vz^{d}(\alpha)\|$ in Example \ref{hyperharmonic}.
The reason that we scale the first two coordinates is to provide more interpretable visualizations in Figure \ref{Clusters}.
Finally, we added noise sampled from the standard normal distribution $\mathcal{N}(0, 1)$ to each dimension.
We now ask the question how well spectral clustering is able to recover the ground truth clusters from the high-dimensional data, as shown in Figure \ref{Clusters}.

Due to the triangle inequality, any two points in the same ground truth cluster will be closer to each other than to any point in a different cluster, according to the ground truth distances.
As a consequence, without noise, spectral clustering (we use the standard settings from the \textsf{R} library Spectrum with a maximum of two clusters) recovers the clusters perfectly from the high-dimensional data (Figure \ref{Clusters}, left column).
Furthermore, also from the triangle inequality and our results in Section \ref{whensection}, it can be shown that when corrupted by noise, points will likely remain closer to points in the same ground truth cluster for $\alpha > 4$, whereas for $\alpha<4$, points will be nearly equally likely closer to points in other ground truth clusters than their own.

From Figure \ref{Clusters}, we see that this directly affects the performance of spectral clustering.
For $\alpha=5$, the clusters are perfectly identified, whereas for $\alpha=3$, the inferred clusters become increasingly meaningless when the dimensionality of the data from which they are derived grows. 
Again, $\alpha=4$ corresponds to a boundary case.
Here, the true clusters are identified well, although not perfectly, from the noisy high-dimensional data.

\begin{figure}[b!]
	\centering
	\includegraphics[width=\linewidth]{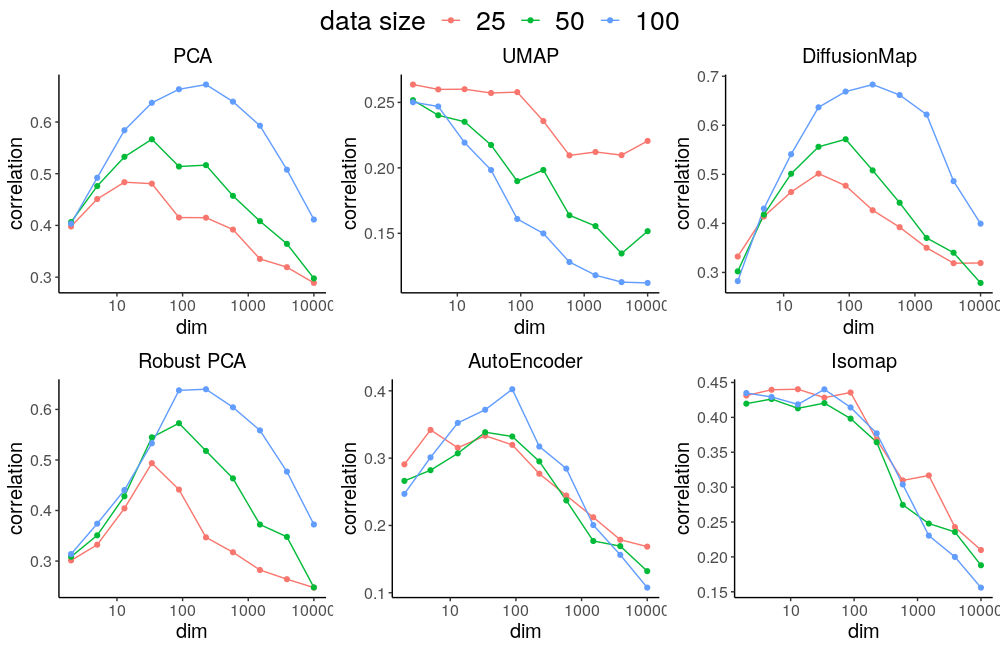}
	\caption{The performance of six common dimensionality reduction methods for recovering the ground truth ordering of the points on $\mathcal{L}$ under the effect of noise, by dimension and data size, for $\alpha=3$.}
	\label{DimredBySize}
\end{figure}

\paragraph{Role of Data Size} 

As common in machine learning applications, we observed that having more data can resolve much of the issues caused by noise.
Figure \ref{DimredBySize} illustrates this for our experiment summarized by Figures \ref{setupoverfit} \& \ref{expdimred}, where we now fixed $\alpha=3$, but varied the data size $n\in\{25,50,100\}$.
As discussed above, neighboring relations in the high-dimensional data will eventually become random for $\alpha=3$.
For all dimensionality reduction methods, we observe that their performance consistently drops for sufficiently high dimensions (Figure \ref{DimredBySize}).
However, for PCA, robust PCA, and diffusion maps, we observe that for larger data sizes, higher performances are reached first, and the dimension after which they struggle to recover the model gets delayed.
For the autoencoder and Isomap, the role of the data size is inconclusive. 
Interestingly, the performance with UMAP is consistently worse for larger data sizes (with the current settings).
These results can vary for the value of $\alpha$ however.
For example, in case of the autoencoder model which consistently showed an optimal dimension after which the performance decreases even for $\alpha>4$ (Figure \ref{expdimred}), we observed that larger data sizes may accommodate the noise (see Supplementary Figure \ref{AEalpha5} in Appendix \ref{suppfig}).

\section{Discussion and Conclusion}
\label{discandcon}

\emph{Noise can be, but does not have to be, fatal} when learning from high-dimensional data based on distances.
Although this is not a surprising fact, we provided a first and exact mathematical characterization when such distances become (un)informative under noise.
Furthermore, we found that our concept of meaningfulness of distances, i.e., when they are informative for the ground truth, is fundamentally different from distance concentration, and suggests direct connections to the ability of dimensionality reductions to recover the data model.
Although we focused on small artificial data sets to validate the results in this (mainly theoretical) paper, they are interesting nevertheless, and encourage further foundational and practical research into learning from noisy high-dimensional data.

Unfortunately, the conditions for distances to be meaningful will be difficult to assess in practice.
For example, one can easily derive from our results that when many features are irrelevant to the model, neighborhood relations will become uninformative in the presence of noise. 
In practice however, we may be unsure whether any features are irrelevant at all.
How algorithms may actually benefit from the results presented in this paper, is open to further research.
Nevertheless, there is an abundance of high-dimensional data where we cannot effectively recover the structure due to noise, leading to poor subsequent model inference, such as in biological single-cell data analysis.
We argue that better understanding the behavior of distances in noisy high-dimensional data---for which we provided, illustrated, and validated theoretical results in this paper---is imperative for one to be able to design better computational methods for their analysis.

\subsubsection*{Acknowledgments}

This research was funded by the ERC under the EU's 7th Framework and H2020 Programmes (ERC Grant Agreement no.\ 615517 and 963924), the Flemish Government (AI Research Program), and the FWO (project no.\ G091017N, G0F9816N, 3G042220).

\bibliography{references}

\clearpage
\appendix

\thispagestyle{empty}

\twocolumn[ \makesupplementtitle ]

\section{Supplementary Figures}
\label{suppfig}

\begin{figure}[h]
  \centering
    \includegraphics[width=\linewidth]{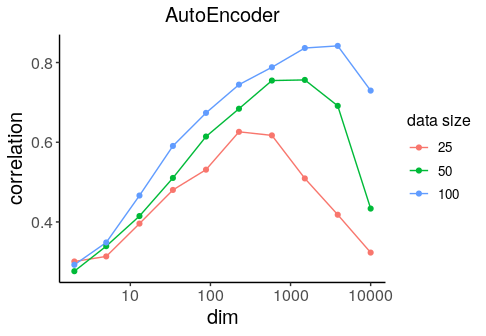}
    \caption{The performance of the autoencoder model for recovering the ground truth ordering of the points on the linear model $\mathcal{L}$ presented in the main paper (see also Figures \ref{setupoverfit} \& \ref{expdimred}) under the effect of noise, by dimension and data size, for $\alpha=5$.
    Neighboring relations will become increasingly truthful in the noisy high-dimensional data.
    Nevertheless---for the same hyperparameters and architecture apart from the outer layers (which accommodate the data dimension)---the autoencoder eventually struggles to recover the model from sufficiently large dimensions.
    However, larger data sizes may provide a temporary solution to this.}
    \label{AEalpha5}
\end{figure}

\section{Theorems and Proofs}
\label{thandproofs}

This part of the appendix contains the mathematical proofs of the results presented in the main paper.
The following is the proof of our main theorem that leads to all principal results presented in this paper. 

\begin{proof}[Proof of Theorem \ref{CLTspecial}]
	For $i\in\mathbb{N}^*$, denote $\delta_i(\vx,\vy)\coloneqq x_i-y_i$ and $\delta_i(\vx,\vz)\coloneqq x_i-z_i$.
	For each $d\in\mathbb{N}^*$, we have
	{\scriptsize
	    \begin{align*}
	    &\rz(d)\\
	    &\hspace{.25em}\coloneqq\left\|\rvn^{(d)}_{\vx}-\rvn^{(d)}_{\vy}+\vx^{(d)}-\vy^{(d)}\right\|^2-\left\|\rvn^{(d)}_{\vx}-\rvn^{(d)}_{\vz}+\vx^{(d)}-\vz^{(d)}\right\|^2\\
	    &\hspace{.25em}=\sum_{i=1}^d\underbrace{\left[\left(\underbrace{\rn_{x_i}-\rn_{y_i}+\delta_i(\vx,\vy)}_{\eqqcolon \rr_{x_i,y_i}}\right)^2-\left(\underbrace{\rn_{x_i}-\rn_{z_i}+\delta_i(\vx,\vz)}_{\eqqcolon \rr_{x_i,z_i}}\right)^2\right]}_{\eqqcolon \rz_i}.
	    \end{align*}}
	We have
	$$
	\mathbb{E}\left(\rr_{x_i,y_i}\right)=\delta_i(\vx,\vy),\hspace{1em}\mathrm{Var}\left(\rr_{x_i,y_i}\right)=2\sigma^2,
	$$
	so that
	$$
	\mathbb{E}\left(\rr_{x_i,y_i}^2\right)=\mathrm{Var}\left(\rr_{x_i,y_i}\right)+\mathbb{E}\left(\rr_{x_i,y_i}\right)^2=2\sigma^2+\delta^2_i(\vx,\vy),
	$$
	and thus
	$$
	\mathbb{E}(\rz_i)=\delta_i^2(\vx,\vy)-\delta_i^2(\vx,\vz).
	$$
	Since $\mathrm{Cov}(\rx,\ry)=\mathbb{E}(\rx\ry)-\mathbb{E}(\rx)\mathbb{E}(\ry)$ for random variables $\rx,\ry$, by symmetry and the fact that $\rn_{x_i}$ and $\rn_{y_i}$ are independent and $\mathbb{E}(\rn_{x_i}^{2k+1})=0$ for $k\in\mathbb{N}$, we have
	\begin{align*}
	\mathrm{Var}\left(\rr^2_{x_i,y_i}\right)&=2\mathrm{Var}\left(\rn_{x_i}^2\right)-8\mathrm{Cov}\left(\rn_{x_i}^2,\rn_{x_i}\rn_{y_i}\right)\\
	&\hspace{1em}+4\mathrm{Var}(\rn_{x_i}\rn_{y_i})+8\delta_i^2(x,y)\mathrm{Var}(\rn_{x_i})\\
	&=2\left(\mu'_4-\mathbb{E}\left(\rn_{x_i}^2\right)^2\right)+4\mathbb{E}\left(\rn_{x_i}^2\right)^2\\
	&\hspace{1em}+8\sigma^2\delta_i^2(\vx,\vy)\\
	&=2\mu'_4+2\sigma^4+8\sigma^2\delta_i^2(\vx,\vy)
	\end{align*}
	and analogously
	\begin{align*}
	&\mathrm{Cov}\left(\rr^2_{x_i,y_i},\rr^2_{x_i,z_i}\right)\\
	&\hspace{2em}=\mathrm{Var}\left(\rn_{x_i}^2\right)\\
	&\hspace{3em}-4\mathrm{Cov}\left(\rn_{x_i}^2,\rn_{x_i}\rn_{y_i}\right)\\
	&\hspace{3em}+2\left(\delta_i(\vx,\vy)+\delta_i(\vx,\vz)\right)\mathrm{Cov}\left(\rn_{x_i}^2,\rn_{x_i}\right)\\
	&\hspace{3em}+4\mathrm{Cov}(\rn_{x_i}\rn_{y_i},\rn_{x_i}\rn_{z_i})\\
	&\hspace{3em}-4\left(\delta_i(\vx,\vy)+\delta_i(\vx,\vz)\right)\mathrm{Cov}(\rn_{x_i}\rn_{y_i},\rn_{x_i})\\
	&\hspace{3em}+4\delta_i(\vx,\vy)\delta_i(\vx,\vz)\mathrm{Var}(\rn_{x_i})\\
	&\hspace{2em}=\mu'_4-\mathbb{E}\left(\rn_{x_i}^2\right)^2+4\sigma^2\delta_i(\vx,\vy)\delta_i(\vx,\vz)\\
	&\hspace{2em}=\mu'_4-\sigma^4+4\sigma^2\delta_i(\vx,\vy)\delta_i(\vx,\vz).
	\end{align*}
	It thus holds that
	\begin{align*}
	&\mathrm{Var}(\rz_i)\\
	&\hspace{2em}=\mathrm{Var}\left(\rr^2_{x_i,y_i}\right)+\mathrm{Var}\left(\rr^2_{x_i,z_i}\right)\\
	&\hspace{3em}-2\mathrm{Cov}\left(\rr^2_{x_i,y_i},\rr^2_{x_i,z_i}\right)\\
	&\hspace{2em}=2\mu'_4+6\sigma^4\\
	&\hspace{3em}+8\sigma^2\left(\delta^2_i(\vx,\vy)+\delta^2_i(\vx,\vz)-\delta_i(\vx,\vy)\delta_i(\vx,\vz)\right).
	\end{align*}
	We conclude that
	$$
	\mu\left(\rz(d)\right)=\sum_{i=1}^d\mathbb{E}(\rz_i)=\left\|\vx^{(d)}-\vy^{(d)}\right\|^2-\left\|\vx^{(d)}-\vz^{(d)}\right\|^2,
	$$
	and
	\begin{align*}
	\sigma^2\left(\rz(d)\right)&=\sum_{i=1}^d\mathrm{Var}(\rz_i)\\
	&=2d\left(\mu'_4+3\sigma^4\right)\\
	&\hspace{1em}+8\sigma^2\left(\left\|\vx^{(d)}-\vy^{(d)}\right\|^2+\left\|\vx^{(d)}-\vz^{(d)}\right\|^2\right.\\
	&\hspace{6em}\left.\vphantom{\left\|\vx^{(d)}\right|^2}-\left\langle \vx^{(d)} - \vy^{(d)}, \vx^{(d)} - \vz^{(d)}\right\rangle\right).
	\end{align*}
	Also observe that
	\begin{align*}
	&\left\|\vx^{(d)}-\vy^{(d)}\right\|^2+\left\|\vx^{(d)}-\vz^{(d)}\right\|^2\\
	&\hspace{1em}-\left\langle \vx^{(d)} - \vy^{(d)}, \vx^{(d)} - \vz^{(d)}\right\rangle\\
	&\hspace{2em}\geq \left\|\vx^{(d)}-\vy^{(d)}\right\|^2+\left\|\vx^{(d)}-\vz^{(d)}\right\|^2\\
	&\hspace{3em}-\left\|\vx^{(d)}-\vy^{(d)}\right\|\left\|\vx^{(d)}-\vz^{(d)}\right\|\\
	&\hspace{2em}= \left(\left\|\vx^{(d)}-\vy^{(d)}\right\|-\left\|\vx^{(d)}-\vz^{(d)}\right\|\right)^2\\
	&\hspace{3em}+\left\|\vx^{(d)}-\vy^{(d)}\right\|\left\|\vx^{(d)}-\vz^{(d)}\right\|\\
	&\hspace{2em}\geq \left\|\vx^{(d)}-\vy^{(d)}\right\|\left\|\vx^{(d)}-\vz^{(d)}\right\|,
	\end{align*}
	which is used for Remark \ref{otherconditions} in the main paper.
	Now for $\epsilon > 0$, let
	$$
	\sA_{i,\epsilon}\coloneqq\left\{(\rn_{x_i},\rn_{y_i},\rn_{z_i}):\left|\rz_{i}-\mathbb{E}(\rz_i)\right|>\epsilon \sigma\left(\rz(d)\right)\right\},
	$$
	and
	$$
	C_d(\epsilon)\coloneq \frac{1}{12}\left(\sqrt{\underbrace{36\Delta_{\infty}^2(d)+24\epsilon\sigma\left(\rz(d)\right)}_{\eqqcolon D}}-6\Delta_{\infty}(d)\right).
	$$
	For any $\epsilon>0$ and $M\geq \sqrt{\frac{6}{\epsilon}}$, by (\ref{condition}), we find that for $d$ sufficiently large
	\begin{align}
	\label{usedbelow}
	C_d(\epsilon)&\geq\frac{\sqrt{24\epsilon}-\frac{6}{M}}{12}\sqrt{\sigma\left(\rz(d)\right)}\\
	&\geq\frac{2\sqrt{6\epsilon}-\sqrt{6\epsilon}}{12}\sqrt{\sigma\left(\rz(d)\right)}=\frac{\sqrt{6\epsilon}}{12}\sqrt{\sigma\left(\rz(d)\right)},
	\end{align}
	so that in particular $\lim_{d\rightarrow\infty}C_d(\epsilon)=+\infty$.
	If now 
	\begin{align*}
	\max\{|\rn_{x_i}|,|\rn_{y_i}|,|\rn_{z_i}|\}\leq C_d(\epsilon),
	\end{align*}
	we find that for $i=1,\ldots,d$,
	\begin{align*}
	&|\rz_i-\mathbb{E}(\rz_i)|\\
	&\hspace{2em}\leq |\rn_{y_i}|^2+|\rn_{z_i}|^2+2|\rn_{x_i}||\rn_{y_i}|+2|\rn_{x_i}||\rn_{z_i}|\\
	&\hspace{3em}+2(|\rn_{x_i}|+|\rn_{y_i}|+|\rn_{z_i}|)\Delta_{\infty}(d)\\
	&\hspace{2em}\leq 6C_d(\epsilon)^2+6\Delta_{\infty}(d)C_d(\epsilon)\\
	&\hspace{2em}\leq\frac{1}{24}\left(72\Delta_{\infty}^2(d)+24\epsilon\sigma\left(\rz(d)\right)-12\Delta_{\infty}(d)\sqrt{D}\right)\\
	&\hspace{3em}+\frac{1}{2}\left(\Delta_{\infty}(d)\sqrt{D}-6\Delta_{\infty}^2(d)\right)\\
	&\hspace{2em}=\epsilon\sigma\left(\rz(d)\right).
	\end{align*}
	This shows that
	{\small	
	\begin{align*}
	&\sA_{i,\epsilon}\subseteq \{(\rn_{x_i},\rn_{y_i},\rn_{z_i}):\max\{|\rn_{x_i}|,|\rn_{y_i}|,|\rn_{z_i}|\}>C_d(\epsilon)\}\\
	&\hspace{20em}\eqqcolon \widetilde{\sA}_{i,\epsilon}.
	\end{align*}}
	Observe that for every $i,k,l\in\mathbb{N}^*$, we have
	\begin{align*}
	&\mathbb{E}\left(\rn^{2k+1}_{x_i}\rn^{l}_{y_i}\mathds{1}_{\widetilde{\sA}_{i,\epsilon}}\right)\\
	&\hspace{2em}=\mathbb{E}\left(\rn^{2k+1}_{x_i}\right)\mathbb{E}\left(\rn^{l}_{y_i}\right)\\
	&\hspace{3em}-\mathbb{E}\left(\rn^{2k+1}_{x_i}\mathds{1}_{\left(\widetilde{\sA}_{i,\epsilon}\right)^c}\right)\mathbb{E}\left(\rn^{l}_{y_i}\mathds{1}_{\left(\widetilde{\sA}_{i,\epsilon}\right)^c}\right)\\
	&\hspace{2em}=0.
	\end{align*}
	Again, due to symmetry, we thus have
	\begin{align*}
	&\mathrm{E}\left(\left(\rr_{x_i,y_i}^2-\rr_{x_i,z_i}^2\right)\mathds{1}_{\widetilde{\sA}_{i,\epsilon}}\right)\\
	&\hspace{2em}=\left(\delta^2_i(\vx,\vy)-\delta^2_i(\vx,\vz)\right)P\left(\widetilde{\sA}_{i,\epsilon}\right).
	\end{align*}
	Furthermore, we have
	\begin{align*}
	\mathbb{E}\left(\rr_{x_i,y_i}^4\mathds{1}_{\widetilde{\sA}_{i,\epsilon}}\right)&=2\mathbb{E}\left(\rn_{x_i}^4\mathds{1}_{\widetilde{\sA}_{i,\epsilon}}\right)+6\mathbb{E}\left(\rn_{x_i}^2\rn_{y_i}^2\mathds{1}_{\widetilde{\sA}_{i,\epsilon}}\right)\\
	&\hspace{2em}+12\delta^2_i(\vx,\vy)\mathbb{E}\left(\rn_{x_i}^2\mathds{1}_{\widetilde{\sA}_{i,\epsilon}}\right)\\
	&\hspace{2em}+\delta_i(\vx,\vy)^4P\left(\widetilde{\sA}_{i,\epsilon}\right),
	\end{align*}
	and
	\begin{align*}
	&\mathbb{E}\left(\left(\rr_{x_i,y_i}^2\rr_{x_i,z_i}^2\right)\mathds{1}_{\widetilde{\sA}_{i,\epsilon}}\right)\\
	\hspace{2em}&=\mathbb{E}\left(\rn_{x_i}^4\mathds{1}_{\widetilde{\sA}_{i,\epsilon}}\right)+3\mathbb{E}\left(\rn_{x_i}^2\rn_{y_i}^2\mathds{1}_{\widetilde{\sA}_{i,\epsilon}}\right)\\
	&\hspace{3em}+2\left(\delta_i(\vx,\vy)+\delta_i(\vx,\vz)\right)^2\mathbb{E}\left(\rn_{x_i}^2\mathds{1}_{\widetilde{\sA}_{i,\epsilon}}\right)\\
	&\hspace{3em}+\delta_i(\vx,\vy)^2\delta_i(\vx,\vz)^2P\left(\widetilde{\sA}_{i,\epsilon}\right).
	\end{align*}
	Putting things together, we have
	\begin{align*}
	&\mathbb{E}\left((\rz_{i}-\mathbb{E}(\rz_i))^2\mathds{1}_{\sA_{i,\epsilon}}\right)\\
	&\hspace{.15em}\leq \mathbb{E}\left(\rz_{i}-\mathbb{E}(\rz_i))^2\mathds{1}_{\widetilde{\sA}_{i,\epsilon}}\right)\\
	&\hspace{.15em}=\mathbb{E}\left(\left(\rr_{x_i,y_i}^2-\rr_{x_i,z_i}^2\right)^2\mathds{1}_{\widetilde{\sA}_{i,\epsilon}}\right)\\
	&\hspace{1.15em}-2\mathbb{E}(\rx_i)\mathbb{E}\left(\left(\rr_{x_i,y_i}^2-\rr_{x_i,z_i}^2\right)\mathds{1}_{\widetilde{\sA}_{i,\epsilon}}\right)\\
	&\hspace{1.15em}+\mathbb{E}(\rz_i)^2P\left(\widetilde{\sA}_{i,\epsilon}\right)\\
	&\hspace{.15em}=\mathbb{E}\left(\rr_{x_i,y_i}^4\mathds{1}_{\widetilde{\sA}_{i,\epsilon}}\right)+\mathbb{E}\left(\rr_{x_i,z_i}^4\mathds{1}_{\widetilde{\sA}_{i,\epsilon}}\right)\\
	&\hspace{1.15em}-2\mathbb{E}\left(\left(\rr_{x_i,y_i}^2\rr_{x_i,z_i}^2\right)\mathds{1}_{\widetilde{\sA}_{i,\epsilon}}\right)\\
	&\hspace{1.15em}+\mathbb{E}(\rz_i)P\left(\widetilde{\sA}_{i,\epsilon}\right)\left(\mathbb{E}(\rz_i)-2\left(\delta^2_i(\vx,\vy)-\delta^2_i(\vx,\vz)\right)\right)\\
	&\hspace{.15em}=2\mathbb{E}\left(\rn_{x_i}^4\mathds{1}_{\widetilde{\sA}_{i,\epsilon}}\right)+6\mathbb{E}\left(\rn_{x_i}^2\rn_{y_i}^2\mathds{1}_{\widetilde{\sA}_{i,\epsilon}}\right)\\
	&\hspace{1.15em}+8\left(\delta_i^2(\vx,\vy)+\delta^2_i(\vx,\vz)-\delta_i(\vx,\vy)\delta_i(\vx,\vz)\right)\\
	&\hspace{14em}\times\mathbb{E}\left(\rn_{x_i}^2\mathds{1}_{\widetilde{\sA}_{i,\epsilon}}\right)\\
	&\hspace{1.15em}+\delta^2_i(\vx,\vy)\delta^2_i(\vx,\vz)P\left(\widetilde{\sA}_{i,\epsilon}\right).
	\end{align*}
	Now since for each $j,k,l\in\mathbb{N}$, it holds that
	\begin{align*}
	&\mathbb{E}\left(\rn_{x_i}^{2j} \rn_{y_i}^{2k} \rn_{z_i}^{2l}\mathds{1}_{\widetilde{\sA}_{i,\epsilon}}\right)\\
	&\hspace{2em}\leq\mathbb{E}\left(\rn_{x_i}^{2j}\mathds{1}_{|\rn_{x_i}|>C_d(\epsilon)}\right)\mathbb{E}\left(\rn_{y_i}^{2k}\right)\mathbb{E}\left(\rn_{z_i}^{2l}\right)\\
	&\hspace{3em}+\mathbb{E}\left(\rn_{y_i}^{2k}\mathds{1}_{|\rn_{y_i}|>C_d(\epsilon)}\right)\mathbb{E}\left(\rn_{x_i}^{2j}\right)\mathbb{E}\left(\rn_{z_i}^{2l}\right)\\
	&\hspace{3em}+\mathbb{E}\left(\rn_{z_i}^{2l}\mathds{1}_{|\rn_{z_i}|>C_d(\epsilon)}\right)\mathbb{E}\left(\rn_{x_i}^{2j}\right)\mathbb{E}\left(\rn_{y_i}^{2k}\right),
	\end{align*}
	we find that
	\begin{align*}
	&\mathbb{E}\left((\rz_{i}-\mathbb{E}(\rz_i))^2\mathds{1}_{\sA_{i,\epsilon}}\right)\\
	&\hspace{.15em}\leq 2\mathbb{E}\left(\rn_{x_i}^4\mathds{1}_{|\rn_{x_i}|>C_d(\epsilon)}\right)\\	
	&\hspace{1.15em}+4\mu'_4P\left(|\rn_{x_i}|>C_d(\epsilon)\right)\\
	&\hspace{1.15em}+12\sigma^2\mathbb{E}\left(\rn_{x_i}^2\mathds{1}_{|\rn_{x_i}|>C_d(\epsilon)}\right)\\
	&\hspace{1.15em}+6\sigma^4P\left(|\rn_{x_i}|>C_d(\epsilon)\right)\\
	&\hspace{1.15em}+8\left(\delta_i^2(\vx,\vy)+\delta^2_i(\vx,\vz)-\delta_i(\vx,\vy)\delta_i(\vx,\vz)\right)\\
	&\hspace{3em}\times\left(\mathbb{E}\left(\rn_{x_i}^2\mathds{1}_{|\rn_{x_i}|>C_d(\epsilon)}\right)\!+\!2\sigma^2P\left(|\rn_{x_i}|>C_d(\epsilon)\right)\right)\\
	&\hspace{1.15em}+3\delta^2_i(\vx,\vy)\delta^2_i(\vx,\vz)P\left(|\rn_{x_i}|>C_d(\epsilon)\right)\\
	&\hspace{.15em}\leq 2\mathbb{E}\left(\rn_{x_i}^4\mathds{1}_{|\rn_{x_i}|>C_d(\epsilon)}\right)+12\sigma^2\mathbb{E}\left(\rn_{x_i}^2\mathds{1}_{|\rn_{x_i}|>C_d(\epsilon)}\right)\\
	&\hspace{1.15em}+8\left(\delta_i^2(\vx,\vy)+\delta^2_i(\vx,\vz)-\delta_i(\vx,\vy)\delta_i(\vx,\vz)\right)\\
	&\hspace{3em}\times\mathbb{E}\left(\rn_{x_i}^2\mathds{1}_{|\rn_{x_i}|>C_d(\epsilon)}\right)\\
	&\hspace{1.15em}+\left(4\mu'_4+6\sigma^4+16\max\left\{1,\sigma^2\right\}M_i\right)\\
	&\hspace{3em}\times P\left(|\rn_{x_i}|>C_d(\epsilon)\right),
	\end{align*}
	where
	\begin{align*}
	M_i&\coloneqq \delta_i^2(\vx,\vy)+\delta^2_i(\vx,\vz)-\delta_i(\vx,\vy)\delta_i(\vx,\vz)\\
	&\hspace{1em}+\delta_i^2(\vx,\vy)\delta^2_i(\vx,\vz).	    
	\end{align*}
	Summing over $i=1,\ldots,d$, we find that
	\begin{align*}
	&\frac{1}{\sigma^2\left(\rz(d)\right)}\sum_{i=1}^d\mathbb{E}\left(\rn_{x_i}^4\mathds{1}_{|\rn_{x_i}|>C_d(\epsilon)}\right)\\
	&\hspace{2em}\leq\frac{d\mathbb{E}\left(\rn_{x_1}^4\mathds{1}_{\left|\rn_{x_1}\right|>C_d(\epsilon)}\right)}{\sigma^2\left(2d(\mu'_4+3\sigma^4)\right)}\overset{d\rightarrow\infty}{\longrightarrow} 0,
	\end{align*}
	since $\mu'_4$ is finite and $C_d(\epsilon)\overset{d\rightarrow\infty}{\longrightarrow} +\infty$.
	Analogously, we have
	\begin{align*}
	\frac{1}{\sigma^2\left(\rz(d)\right)}\sum_{i=1}^d\mathbb{E}\left(\rn_{x_i}^2\mathds{1}_{|\rn_{x_i}|>C_d(\epsilon)}\right)\overset{d\rightarrow\infty}{\longrightarrow} 0.
	\end{align*}
	We also have
	\begin{align*}
	&\frac{1}{\sigma^2\left(\rz(d)\right)}\sum_{i=1}^d\left(\delta_i^2(\vx,\vy)+\delta^2_i(\vx,\vz)-\delta_i(\vx,\vy)\delta_i(\vx,\vz)\right)\\
	&\hspace{12.5em}\times\mathbb{E}\left(\rn_{x_i}^2\mathds{1}_{|\rn_{x_i}|>C_d(\epsilon)}\right)\\
	&\hspace{1em}\leq\frac{1}{8\sigma^2}\mathbb{E}\left(\rn_{x_1}^2\mathds{1}_{\left|\rn_{x_1}\right|>C_d(\epsilon)}\right)\overset{d\rightarrow\infty}{\longrightarrow} 0.
	\end{align*}
	Finally, we have that
	\begin{align*}
	\frac{1}{\sigma^2\left(\rz(d)\right)}\sum_{i=1}^d M_i P\left(|\rn_{x_i}|>C_d(\epsilon)\right)\overset{d\rightarrow\infty}{\longrightarrow} 0.
	\end{align*}
	Indeed, given the observation above, it suffices to show that
	$$
	P\left(|\rn_{x_1}|>C_d(\epsilon)\right)\frac{\sum_{i=1}^d\delta^2_i(\vx,\vy)\delta^2_i(\vx,\vz)}{\sigma^2\left(\rz(d)\right)}\overset{d\rightarrow\infty}{\longrightarrow} 0.
	$$
	Using Markov's inequality, (\ref{condition}), and (\ref{usedbelow}), this follows from the fact that for $d$ sufficiently large
	\begin{align*}
	&P\left(|\rn_{x_1}|>C_d(\epsilon)\right)\frac{\sum_{i=1}^d\delta^2_i(\vx,\vy)\delta^2_i(\vx,\vz)}{\sigma^2\left(\rz^{(d)}\right)}\\
	&\hspace{2em}\leq \frac{\mathbb{E}\left(|\rn_{x_1}|\right)d\Delta^4_{\infty}(d)}{C_d(\epsilon)\sigma^2\left(\rz(d)\right)}\\
	&\hspace{2em}\leq \frac{12\mathbb{E}\left(|\rn_{x_1}|\right)d\Delta^4_{\infty}(d)}{\sqrt{6\epsilon}\sigma^{\frac{5}{2}}\left(\rz(d)\right)}\overset{d\rightarrow\infty}{\longrightarrow} 0.
	\end{align*}
	We conclude that
	$$
	\lim_{d\rightarrow\infty}\frac{1}{\sigma^2\left(\rz(d)\right)}\mathbb{E}\left((\rz_{i}-\mathbb{E}(\rz_i))^2\mathds{1}_{\sA_{i,\epsilon}}\right)=0,
	$$
	and this for every $\epsilon>0$.
	Hence, Linderberg's condition is satisfied, so that we may apply the central limit theorem to $\rz(d)$, i.e.,
	$$
	\frac{\rz(d)-\mu\left(\rz(d)\right)}{\sigma\left(\rz(d)\right)}\overset{d\rightarrow\infty}{\longrightarrow}_p \mathcal{N}(0,1).
	$$
	Since convergence in probability to a continuous distribution function necessarily implies uniform convergence \cite{parzen1960modern}, we find that
	\begin{align*}
	&\lim_{d\rightarrow \infty}\left|P\left(\rz(d)\leq 0\right)-\Phi\left(-\frac{\mu\left(\rz(d)\right)}{\sigma\left(\rz(d)\right)}\right)\right|\\
	&\hspace{1em}=\lim_{d\rightarrow \infty}\left|P\left(\frac{\rz(d)-\mu\left(\rz(d)\right)}{\sigma\left(\rz(d)\right)}\leq -\frac{\mu\left(\rz(d)\right)}{\sigma\left(\rz(d)\right)}\right)\right.\\
	&\hspace{12.5em}\left.-\Phi\left(-\frac{\mu\left(\rz(d)\right)}{\sigma\left(\rz(d)\right)}\right)\right|= 0,
	\end{align*}
	which concludes the proof.
\end{proof}

Theorem \ref{CLTspecial} can now be used to easily derive Corollary \ref{randomornot} as follows.

\begin{proof}[Proof of Corollary \ref{randomornot}]
	\begin{enumerate}
		\item 
		This is an immediate consequence of Theorem \ref{CLTspecial}.
		
		\item 
		By assumption, it holds that
		$$
		\frac{d}{\left(\left\|\vx^{(d)}-\vz^{(d)}\right\|^2-\left\|\vx^{(d)}-\vy^{(d)}\right\|^2\right)^2}\overset{d\rightarrow\infty}{\longrightarrow}0.
		$$
		Furthermore, we have
		\begin{align*}
		&0\leq\tfrac{\left\|\vx^{(d)}-\vy^{(d)}\right\|^2+\left\|\vx^{(d)}-\vz^{(d)}\right\|^2-\left\langle \vx^{(d)} - \vy^{(d)}, \vx^{(d)} - \vz^{(d)}\right\rangle}{\left(\left\|\vx^{(d)}-\vz^{(d)}\right\|^2-\left\|\vx^{(d)}-\vy^{(d)}\right\|^2\right)^2}\\
		&\leq\frac{\left(\left\|\vx^{(d)}-\vy^{(d)}\right\|+\left\|\vx^{(d)}-\vz^{(d)}\right\|\right)^2}{\splitfrac{\left(\left\|\vx^{(d)}-\vz^{(d)}\right\|-\left\|\vx^{(d)}-\vy^{(d)}\right\|\right)^2}{\times\left(\left\|\vx^{(d)}-\vz^{(d)}\right\|+\left\|\vx^{(d)}-\vy^{(d)}\right\|\right)^2}}\\
		&=\frac{1}{\left(\left\|\vx^{(d)}-\vz^{(d)}\right\|-\left\|\vx^{(d)}-\vy^{(d)}\right\|\right)^2}.
		\end{align*}
		Now choose any $M>0$.
		We know that for $d$ sufficiently large
		\begin{align*}
		&\left\|\vx^{(d)}-\vz^{(d)}\right\|-\left\|\vx^{(d)}-\vy^{(d)}\right\|\\
		\hspace{2em}&=\frac{\left\|\vx^{(d)}-\vz^{(d)}\right\|^2-\left\|\vx^{(d)}-\vy^{(d)}\right\|^2}{\left\|\vx^{(d)}-\vz^{(d)}\right\|+\left\|\vx^{(d)}-\vy^{(d)}\right\|}\\
		&\hspace{2em}\geq\frac{2MC\sqrt{d}}{2\Delta_{\infty}(d)\sqrt{d}}\geq M,
		\end{align*}
		so that $\frac{1}{\left(\left\|\vx^{(d)}-\vz^{(d)}\right\|-\left\|\vx^{(d)}-\vy^{(d)}\right\|\right)^2}\overset{d\rightarrow\infty}{\longrightarrow}0$.
		Hence, $\zeta^{(d)}\left(\mu'_4,\sigma,\vx^{(d)},\vy^{(d)},\vz^{(d)}\right)\overset{d\rightarrow\infty}{\longrightarrow}\infty$. 
		The result now follows from Theorem \ref{CLTspecial}.
	\end{enumerate}
\end{proof}

\end{document}